\documentclass{article}

\usepackage{microtype}
\usepackage{graphicx}
\usepackage{subcaption}
\usepackage{booktabs} 

\usepackage{bbm}

\usepackage{hyperref}

\usepackage[accepted]{icml2026}

\usepackage{amsmath}
\usepackage{amssymb}
\usepackage{mathtools}
\usepackage{amsthm}

\usepackage{multirow}

\usepackage[capitalize,noabbrev]{cleveref}

\theoremstyle{plain}
\newtheorem{theorem}{Theorem}[section]
\newtheorem{proposition}[theorem]{Proposition}
\newtheorem{lemma}[theorem]{Lemma}
\newtheorem{corollary}[theorem]{Corollary}
\theoremstyle{definition}
\newtheorem{definition}[theorem]{Definition}
\newtheorem{assumption}[theorem]{Assumption}
\theoremstyle{remark}
\newtheorem{remark}[theorem]{Remark}

\newcommand{\ldec}{\ell_{\textnormal{decision}}}
\newcommand{\rdec}{R^{\ell_{\textnormal{decision}}}}

\newcommand{\lwise}{\ell_{\textnormal{WISE}}}

\newcommand{\rwise}{R^{\lwise}}

\newcommand{\bbE}{\mathbb{E}}
\newcommand{\bbR}{\mathbb{R}}

\newcommand{\bbS}{\mathbb{S}}
\newcommand{\calF}{\mathcal{F}}

\newcommand{\calG}{\mathcal{G}}

\newcommand{\calH}{\mathcal{H}}

\newcommand{\tr}{^\top}

\usepackage[textsize=tiny]{todonotes}

\icmltitlerunning{A Solver-Free Training Method for Predict-then-Optimize}

\begin{document}

\twocolumn[
  \icmltitle{A Solver-Free Training Method for Predict-then-Optimize}

  \icmlsetsymbol{equal}{*}

  \begin{icmlauthorlist}
    \icmlauthor{Beichen Wan}{unc}
    \icmlauthor{Mo Liu}{unc}

  \end{icmlauthorlist}

  \icmlaffiliation{unc}{Department of Statistics and Operations Research, University of North Carolina at Chapel Hill, NC, USA}

  \icmlcorrespondingauthor{Beichen Wan}{bcwan@ad.unc.edu}
  \icmlcorrespondingauthor{Mo Liu}{mo\_liu@unc.edu}

  \icmlkeywords{Predict-then-Optimize, Decision Focused Learning, Differentiable Optimization, Contextual Optimization}

  \vskip 0.3in
]

\printAffiliationsAndNotice{}

\begin{abstract}
We propose a scalable method for training prediction (machine learning) models in the predict-then-optimize paradigm, where model outputs serve as coefficients for a subsequent linear optimization task. Directly minimizing the empirical decision regret is intractable for linear programming and combinatorial optimization since the decision mapping is piecewise constant, and the gradients are zero almost everywhere. While existing methods address this by smoothing the differentiation process, they suffer from scalability issues, since a computationally expensive solver call is required for every gradient evaluation. To address this, we propose a decision-focused learning pipeline based on a measure transformation principle, which yields a new surrogate loss that is completely optimization-solver-free during training. We establish theoretical guarantees, including Fisher consistency and excess risk bounds. Empirically, our method achieves decision quality competitive with state-of-the-art methods while reducing training time by orders of magnitude.
\end{abstract}

\section{Introduction}
We consider a contextual stochastic linear optimization setting where a decision-maker observes contextual features $x \in \bbR^p$ and must choose a decision vector $w$ from a known, non-empty, compact feasible region $\mathcal{S} \subseteq \mathbb{R}^d$. The feasible region $\mathcal{S}$ is allowed to be general, encompassing polyhedral, discrete, or strongly convex sets. The realized cost of a decision is linear in an unknown parameter vector $y \in \mathbb{R}^d$, which is not observed at the time of decision. While the joint distribution of $(x,y)$ is unknown, we assume that $y$ is statistically dependent on $x$; we denote this unknown joint distribution by $P$. For an observed feature input $x \in \bbR^p$, the decision-maker solves:
\begin{equation}
\label{equation:cslo}\tag{$\mathcal{P}1$}
    \min_{w \in \mathcal{S}} \mathbb{E}_{P} [Y^\top w \mid x] = \min_{w \in \mathcal{S}} \mathbb{E}_P [Y \mid x]^\top w.
\end{equation}

Since $\mathcal{S}$ is a general feasible region, problem \eqref{equation:cslo} represents a broad class of decision-making problems, depending on the specific structure of $\mathcal{S}$, including \textit{vehicle routing, inventory management, portfolio optimization, bipartite matching}, and \textit{label ranking}.

Solving \eqref{equation:cslo} requires knowledge of the conditional expectation of the cost vector $\bbE_P[Y|x]$ for any observed $x\in\bbR^p$. To predict $\bbE_P[Y|x]$, we consider a data-driven setting where the decision-maker has access to historical observations $(x_i,y_i)_{i=1}^n$. Let $w^*(\hat{y}) = \arg\min_{w \in \mathcal{S}} \hat{y}^\top w$ denote the optimization solution of the downstream decision-making problem for an input cost vector $\hat{y}$. Note that $w^*(\hat{y})$ is a known function for any input prediction $\hat{y}$ since we have full knowledge of the feasible region $\mathcal{S}$. If there is a prediction $f$ of $\mathbb{E}_P [Y \mid x]$, then for any $x \in \bbR^p $, the decision-maker can use the plug-in solution $w^*(f(x))$ to solve the Problem \eqref{equation:cslo}, i.e.
\begin{equation*}
\label{eq:plug_in_policy}
    w^*(f(x)) \in \arg \min_{w \in \mathcal{S}} f(x)^\top w.
\end{equation*}

Assuming a consistent tie-breaking rule ensures uniqueness (e.g., by ordering solutions), the learning objective of predict-then-optimize becomes finding a prediction function $f$ from a hypothesis class $ \mathcal{F}: \bbR^p\to \bbR^d$ that minimizes the excess decision regret:

\begin{equation}
\label{eq:minimize_rdec}\tag{$\mathcal{P}2$}
\min_{f \in \mathcal{F}}\mathbb{E}_{(X,Y) \sim P} \left[ Y^\top w^*(f(X)) - Y^\top w^*(Y) \right] 
\end{equation}

Since $Y^\top w^*(Y)$ is independent of $f$, minimizing the expected regret is equivalent to minimizing the expected decision cost $\mathbb{E}_{(X,Y) \sim P}[Y^\top w^*(f(X))]$. We define the decision loss with respect to a realized cost vector $y$ and a predictor $\hat{y}$ to be  $\ldec(\hat{y},y):=y^\top w^*(\hat{y}) - y^\top w^*(y)$.

The methods for solving Problem \eqref{eq:minimize_rdec} can be classified into two classes. The first class of methods is called decision-blind methods, which use pure prediction loss (e.g., the squared loss $\ell_2$) to find $f$ that estimates $\mathbb{E}_P [Y \mid x]$. The second class is called decision-focused learning (DFL) methods, which not only utilizes prediction quality but also leverages decision quality to train $f$. Decision-blind methods provide a simple yet efficient training pipeline, whereas DFL methods enable the training process to better align with the true decision-making goal.

A fundamental challenge arises in designing DFL methods for Problem \eqref{eq:minimize_rdec} when the feasible region $\mathcal{S}$ is polyhedral or discrete. In this case, the loss function $\ldec(\cdot,y)$ will be piecewise constant, hence non-convex and non-continuous, and its gradient will be either zero or undefined on the decision borders. As a consequence, first-order methods cannot be applied since the gradients are uninformative. To address this, past DFL methods have focused on smoothing the differentiation process to make first-order methods effective again. Despite the success of these DFL methods in particular tasks, they all require calling the optimization solver each time when evaluating gradients. When the underlying optimization problem is complex and time-consuming to solve (e.g., large-scale mixed-integer linear programming), these methods become computationally prohibitive. 

In this paper, we address this computational issue by developing a solver-free surrogate loss, which is developed by the measure transformation of the source distribution. This transformation enables decision-blind training objectives to better align with the downstream decision-making goal. Our approach retains the computational efficiency of lightweight decision-blind surrogates while leveraging the geometric structure of linear optimization. On the theory side, our framework connects to least squares regression naturally, and thus inherits its favorable properties, including Fisher consistency and excess regret bounds.

\subsection{Our Contributions}
\begin{itemize}
    \item 
    We design a training method for prediction models in the predict-then-optimize paradigm that is both solver-free and aligned with the downstream decision cost. 
    
    \item We apply measure transformation to derive a new solver-free surrogate loss, called the Weight Integrated Spherical Error (WISE). To our knowledge, we are the \textit{first} to study decision-focused learning under measure transformation.

    \item We provide theoretical guarantees for the WISE surrogate loss, including Fisher consistency, calibration bounds, and finite sample excess regret bounds.
    
    \item We show that our method yields competitive performance with respect to other state-of-the-art methods in both well-specified and misspecified cases.
\end{itemize}
\subsection{Related Literature}

The predict-then-optimize framework has been extensively studied in the data-driven decision-making literature \cite{elmachtoub2022smart,bertsimas2020predictive,mandi2024decision,sadana2025survey}. To align predictive models with the decision-making objectives, researchers have developed decision-focused learning methods, also referred to as differentiable optimization or end-to-end learning. We organize representative and related papers from three angles: differentiation strategies, scalability and approximation, and theoretical analysis, as follows:

\paragraph{Differentiation Strategies}

In this area, existing approaches can be categorized by how they differentiate through the downstream optimization problem. When the downstream problem has analytical optimality conditions (e.g., KKT for quadratic programming), implicit differentiation methods allow for backpropagating gradients through the optimizer \cite{dontiTaskbasedEndtoendModel2017,amosOptNetDifferentiableOptimization2017,agrawal2019differentiable}. On the other hand, for problems with polyhedral or discrete feasible regions, researchers have developed convex surrogate losses \cite{elmachtoub2022smart}, randomized smoothing  \cite{berthetLearningDifferentiablePertubed2020}, regularized objective \cite{wilderMeldingDatadecisionsPipeline2019, ferberMIPaaLMixedInteger2020,mandiInteriorPointSolving2020}, and continuous interpolations of the loss function \cite{poganvcic2019differentiation, huang2024decision}.

\paragraph{Scalability and Approximation}
While effective, the aforementioned methods can be computationally prohibitive for large-scale optimization problems, since every gradient evaluation requires calling the optimization solver. To alleviate this, recent works have proposed warmstarting the solving and learning via relaxation \cite{mandiSmartPredictandOptimizeHard2020}, projecting to a low-dimension feasible region \cite{wang2020automatically}, fitting a local convex surrogate \cite{shah2022decision}, fitting a global landscape surrogate \cite{zharmagambetov2023landscape}, and approximating the solver via unrolled neural networks \cite{cristian2025efficient}. \cite{tang2024cave} proposed a scalable method for binary linear programming that utilizes the projection of the predicted cost vector to the optimal normal cone. \cite{berden2026solver} proposed a solver-free training method for linear optimization by comparing a known optimal solution with its adjacent vertices on the feasible polytope, which is related to contextual inverse optimization settings where past optimal decisions are observed \cite{pmlr-v235-mishra24a}. However, these solver-free or approximately solver-free methods still rely on observed decisions, precomputed optimal solutions, or local geometric information around them. Therefore, when only true cost vectors are observed and past optimal decisions are unavailable, they do not provide an entirely solver-free training pipeline.

\paragraph{Theoretical Analysis}
On the theoretical front of contextual linear optimization, \cite{balghitiGeneralizationBoundsPredictThenOptimize2022,liuRiskBoundsCalibration2021,ho2022risk} derived generalization bounds and risk bounds by leveraging the geometric properties. \cite{huFastRatesContextual2022} demonstrated that a simple decision-blind $\ell_2$ loss can achieve a faster rate than DFL methods under well-specification, suggesting that DFL methods are mainly useful under misspecification.  \cite{lan2026bias,elmachtoub2025dissecting,elmachtoub2023estimate} further study the impact of model misspecification of general stochastic optimization by analyzing the asymptotic convergence of decision-blind and DFL estimators.

\paragraph{Applications}

The ideas and methodologies of DFL have been applied across a wide range of areas, including revenue management \cite{banBigDataNewsvendor2019,ban2021personalized,liu2023value} and capacity allocation \cite{er2025decision}. DFL has also motivated new analyses and tools for classical statistical learning questions, such as data collection \cite{liu2023active,wan2026decision}, data efficiency \cite{gupta2021small,gupta2024debiasing}, and optimal transport \cite{liu2026decision}.

\subsection{Notation}

In the paper, we use the following notions. We use $\| \cdot \|$ to denote the $\ell_2$-norm. For a given prediction model $f \in \mathcal{F}$, we define its risk for a given loss $\ell$ under probability measure $P$ as $$R^\ell_P(f) = \bbE_{(X,Y)\sim P}[\ell(f(X),Y)].$$ 

We further plug in the conditional expectation and obtain $R_P^*:= R^{\ldec}_P( \bbE_P[Y|x])$, which denotes the minimum decision risk.

\section{Decision Focused Learning via Measure Transformation}

In this section, we derive our proposed surrogate loss function for the training. Unlike previous approaches that design surrogates by smoothing the optimization oracle or bounding the decision regret, we aim for a method that is completely solver-free and exploits the geometry of linear optimization.

Standard decision-blind methods like minimizing $\ell_2$ loss are often suboptimal for ``predict-then-optimize" because there exists a misalignment between the learning objective (prediction accuracy) and the downstream task objective (decision regret). 
This misalignment arises because decision-blind losses treat the target space as Euclidean and ignore the geometry induced by the downstream optimization problem. In contextual linear optimization, the optimal decision $w^*(\hat{y})$ is determined primarily by the \textit{direction} of the cost vector $\hat{y}$, and the \textit{magnitude} of the regret is scaled by the norm of the true cost $\|y\|$. 

Recall that the decision loss is defined as $\ldec(\hat{y},y) = y^\top w^*(\hat{y}) - y^\top w^*(y)$. This loss function exhibits two geometric properties:
\begin{itemize}
    \item Scale invariant: For any scalar $\alpha>0$, cost prediction $\hat{y}$, and true cost vector $y$, the decision loss is invariant with respect to the scaling of the prediction $\hat{y}$, i.e.,
    \begin{equation*}
        \ldec(\alpha \hat{y},y) = \ldec(\hat{y},y).
    \end{equation*}
    \item Loss scaling: For any scalar $\beta > 0$, cost prediction $\hat{y}$, and true cost vector $y$, the decision loss scales with the true cost $y$, i.e.,
    \begin{equation*}
        \ldec(\hat{y},\beta y) = \beta \ldec(\hat{y},y).
    \end{equation*}
\end{itemize}

The scale invariant property comes from the identity $w^*(\hat{y}) = w^*(\alpha\hat{y})$. Geometrically, solving a linear optimization problem is equivalent to finding a decision vector $w$ that is negatively aligned with $\hat{y}$, and the magnitude of $\hat{y}$ does not affect the solution. See Appendix \ref{append:scale_invariant} for a detailed proof and discussion. The high-level idea of the proof is to use the fact that
\begin{align*}
    \ldec(\alpha \hat{y},y) &= y^\top w^*(\alpha \hat{y}) - y^\top w^*(y) \\
    &=y^\top w^*(\hat{y}) - y^\top w^*(y) \\
    &= \ldec(\hat{y},y) ,
\end{align*}
This implies that learning a prediction function $f$ should focus on recovering the direction of $\mathbb{E}_P [Y \mid x]$, rather than its magnitude. 

On the other hand, the loss scaling property follows directly from the definition of the linear optimization objective, i.e.,\begin{align*}
    \ldec(\hat{y},\beta y) &= (\beta y)^\top w^*(\hat{y}) -(\beta y)^\top w^*(\beta y) \\
    &= \beta\left( y^\top w^*(\hat{y}) - y^\top w^*(\beta y) \right) \\
    &= \beta\left( y^\top w^*(\hat{y}) - y^\top w^*(y) \right) \\
    &=\beta \ldec(\hat{y},y),
\end{align*}
where the third equation we used the identity $w^*(\beta y) = w^*(y)$ shown in the previous discussion. This indicates that the decision loss is not uniform across the distribution; sub-optimal decisions induce strictly higher regret when the true cost vector has a large scale. Thus, the learning objective should prioritize samples with a larger scale.

The above two properties of decision loss hold for any specific problem in the linear optimization class, regardless of the specific structure of the downstream optimization problem. However, existing standard decision-blind methods all fail to capture these two properties. 

To rigorously address this, we propose a transformation of the underlying probability measure to align the decision-blind loss with the geometry of the optimization task. This method acts as a simple pre-processing step and is compatible with any decision-blind learner. Figure \ref{fig:flow_chart} illustrates our pipeline alongside standard decision-blind and decision-focused learning. Our method is completely solver-free yet effectively incorporates decision-focused components.

\begin{figure}
    \centering
    \includegraphics[width=0.9\linewidth]{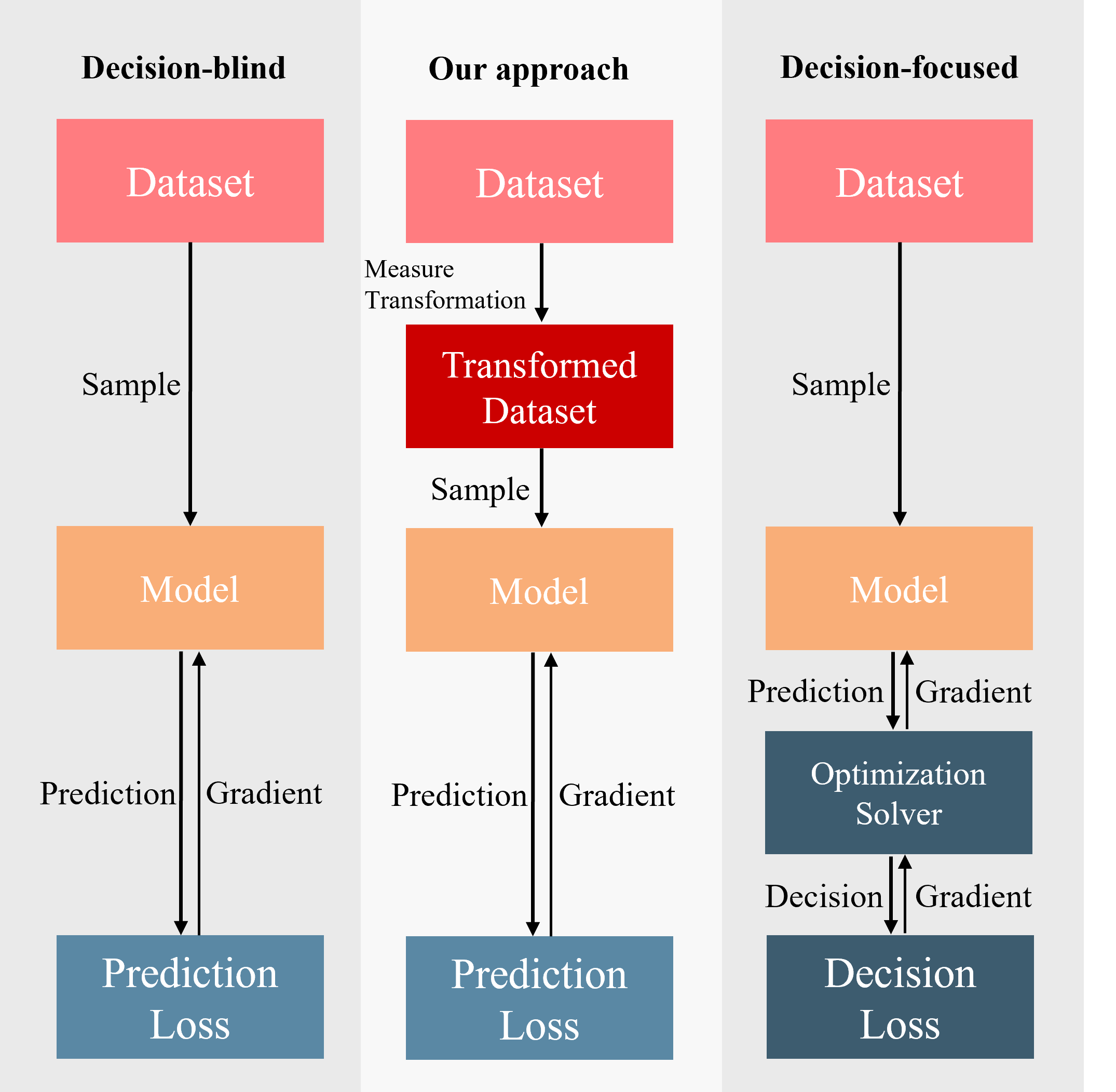}
    \caption{Schematic comparison of \textit{training} pipelines. (Left) Decision-blind learning minimizes prediction error without considering the downstream task. (Right) Decision-focused methods incorporate the optimization solver into the training loop, requiring computationally expensive differentiation through the solver. (Mid) Our approach integrates the optimization structure by a measure transformation, enabling a completely solver-free training process that maintains the computational efficiency of decision-blind methods while achieving decision-aware performance.}
    \label{fig:flow_chart}
\end{figure}

\subsection{Measure Transformation}
\label{sec:change_of_measure}
\begin{figure*}[t]
    \centering
    \includegraphics[width=1.0\linewidth]{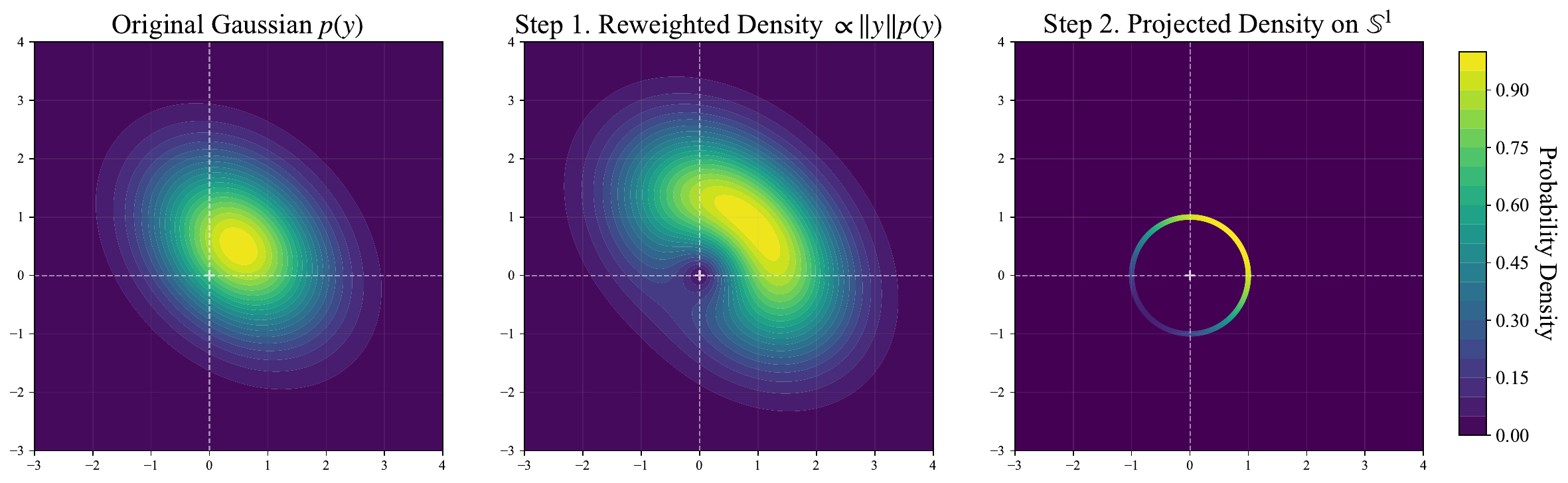}
    \caption{Illustration of the proposed probability measure transformation process, here we ignore the dependencies on $X$ and only focus on $Y$ since the transformation is mainly about the marginal distribution of $Y$. (Left) The original source distribution $Y \sim \mathcal{N}(\mu, \Sigma)$ with $\mu = [0.5, 0.5]^\top$ and $\Sigma = [1, -0.3; -0.3, 1]$. (Middle) The intermediate reweighted density $\tilde{Q}$, proportional to $\|y\|p(y)$, which shifts probability mass based on the norm. (Right) The final projected density $Q$ on the unit sphere $\mathbb{S}^1$.}
    \label{fig:transformation}
\end{figure*}
To enforce the geometric properties derived in the previous section, we propose a two-step transformation of the original probability measure $P$: a \textit{reweighting} based on cost magnitude, and a \textit{projection} onto the unit sphere. 

We assume the cost vector is integrable ($\bbE_P[\|Y\|] < \infty$) and  $\|Y\| >0$ almost surely. Note that $\|y\| = 0$ indicates a degenerate case where the true cost vector is 0. In this case, any prediction induces a 0 decision loss, and hence provides no gradient information for linear optimization. Hence, we restrict our attention to probability measures $P$ supported on $ \bbR^p\times \bbR^d \setminus \{0\}$ to avoid degeneracy and ensure the projection is well defined.

\paragraph{Step 1: Reweighting Measure $P$.}
To address the \textit{Loss Scaling} property, we define a new probability measure $\tilde{Q}$ that reweights $P$ proportionally to the magnitude of the cost vector, i.e., 
$$
d \tilde{Q}(x,y) \propto \|y\| d P(x,y).
$$

More formally speaking, we define a measure $\tilde{Q}$ that is absolutely continuous with respect to $P$ ($\tilde{Q} \ll P$), and the Radon-Nikodym derivative is defined as:
\begin{equation}
\label{eq:reweight_measure}\tag{Step 1}
    \frac{d\tilde{Q}}{dP}(x,y) = \frac{\|y\|}{C}.
\end{equation}
where $C = \bbE_P[\|Y\|]$ is a normalizing constant since $\int \|y\| dP(x,y) = \bbE_P[\|Y\|]$.
By assigning higher probability to samples with large cost magnitudes, $\tilde{Q}$ naturally aligns the learning objective with the true decision regret. 

\paragraph{Step 2: Projecting Measure $\tilde{Q}$ onto the Unit Sphere.}
To address the \textit{Scale Invariance} property (linear optimization solutions only depend on the direction of the cost vector), we project the reweighted measure onto the unit sphere $\mathbb{S}^{d-1}$. Let $T: \bbR^p\times\bbR^d \setminus \{0\} \to \bbR^p\times\mathbb{S}^{d-1}$ be the projection mapping and define $Q$ as the \textit{push-forward measure} of $\tilde{Q}$ under $T$:
\begin{equation}
\label{eq:proj_sphere}\tag{Step 2}
    T(x,y) = (x, \frac{y}{\|y\|}) ,\quad Q = \tilde{Q}\circ T^{-1}.
\end{equation}
In this way, we define $Q$ as the projected measure of $\tilde{Q}$ onto the unit sphere,  which emphasizes the focus on directional learning. 

In terms of random variables, this process is equivalent to sampling $(X, Y)$ from the reweighted measure $\tilde{Q}$ and normalizing the target to obtain $(X, Z) = (X, Y/\|Y\|)$. The resulting measure $Q$ not only assigns a correct weight for each sample but also isolates the directional component of the cost, forcing the learner to focus on the geometry of the optimization landscape. Figure \ref{fig:transformation} illustrates the effect of the measure transformation on a 2-dimensional Gaussian distribution. 

To see that our proposed measure transformation indeed captures the geometric property of the decision loss function, we establish that the decision risks under measures $P$ and $Q$ are equivalent up to a constant multiplicative factor, formalized in the following proposition. 
\begin{proposition}[Decision Risk Equivalence under Measure Transformation]
\label{prop:decision_risk_equivalence_under_change_of_measure}
Let $P$ be a probability measure on $\bbR^p\times \bbR^d$ satisfying $\|Y\|>0$ almost surely and $C < \infty$, where $C$ is defined in \ref{eq:reweight_measure}. Let $Q$ be the measure defined by the reweighting and projection procedure in \ref{eq:reweight_measure} and \ref{eq:proj_sphere}.

    Then, for any measurable function $f: \bbR^p \to \bbR^d$, the following identity holds:

    \begin{align*}
            \bbE_{(X,Z) \sim Q}\left[\ldec(f(X),Z) \right]= \frac{1}{C} \rdec_P(f).
    \end{align*}
\end{proposition}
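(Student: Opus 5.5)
The plan is to unwind the two steps of the measure transformation using the change-of-variables formula for push-forward measures, then the Radon--Nikodym derivative from \ref{eq:reweight_measure}, and finally the loss scaling property of $\ldec$ established above.

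\emph{Step A (push-forward).} Since $Q = \tilde{Q}\circ T^{-1}$, for any measurable $g$ that is nonnegative or $Q$-integrable we have $\bbE_Q[g(X,Z)] = \bbE_{\tilde{Q}}[g(T(X,Y))]$. Taking $g(x,z) = \ldec(f(x),z)$ gives
\begin{equation*}
\bbE_{(X,Z)\sim Q}[\ldec(f(X),Z)] = \bbE_{(X,Y)\sim\tilde{Q}}\Big[\ldec\big(f(X), Y/\|Y\|\big)\Big].
\end{equation*}
For this we need $(x,y)\mapsto \ldec(f(x),y/\|y\|)$ to be measurable. We take that for granted from measurability of $f$ and of $w^*$ under the fixed tie-breaking rule.

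\emph{Step B (reweighting).} Since $\tilde{Q}\ll P$ with density $\|y\|/C$, the right-hand side equals $\frac{1}{C}\bbE_P\big[\|Y\|\,\ldec(f(X),Y/\|Y\|)\big]$.

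\emph{Step C (loss scaling).} Apply the loss scaling property with $\beta=\|y\|>0$, valid $P$-a.s. by assumption, to the vector $y/\|y\|$. This gives $\|y\|\,\ldec(f(x),y/\|y\|) = \ldec(f(x),y)$. That identity rests on $w^*(y/\|y\|) = w^*(y)$, the scale invariance of the argmin, including the tie-breaking rule. Hence the expression equals $\frac1C \bbE_P[\ldec(f(X),Y)] = \frac1C\rdec_P(f)$.

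The main point to check is integrability, so that both sides are well defined. I would avoid this issue by noting $\ldec\ge 0$, since $w^*(y)$ minimizes $y^\top w$ over $\mathcal S$. The change-of-variables identities therefore hold for nonnegative functions in $[0,\infty]$ without any integrability hypothesis. For finiteness, compactness of $\mathcal S$ gives $\ldec(\hat y,y)\le \|y\|\,\mathrm{diam}(\mathcal S)$, so $\rdec_P(f)\le C\,\mathrm{diam}(\mathcal S)<\infty$, and both sides are finite. A secondary subtlety is that the tie-breaking rule must respect positive scaling. I would state this explicitly, for example ordering solutions independently of the cost vector so that the argmin set, and hence the selected solution, is unchanged under positive scaling.
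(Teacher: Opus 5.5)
Your proposal is correct and follows the paper's proof essentially step for step: the push-forward change of variables, the Radon--Nikodym density $\|y\|/C$, and the loss-scaling identity $\ldec(f(x),y)=\|y\|\,\ldec(f(x),y/\|y\|)$. The paper applies the scaling identity before substituting the density, which makes no difference, and your remarks on nonnegativity of $\ldec$, the finiteness bound $\ldec(\hat y,y)\le\|y\|\,\mathrm{diam}(\mathcal S)$, and scale-invariant tie-breaking fill in details the paper leaves implicit.
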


This proposition further implies that the decision-risk minimizers under $P$ and $Q$ coincide, since the two decision risks differ only by a positive constant factor. Therefore, if a surrogate loss is Fisher consistent (i.e., minimizing the surrogate risk also minimizes the decision risk) on the original measure, it remains Fisher consistent on the transformed probability measure. 

Note that both \ref{eq:reweight_measure} and \ref{eq:proj_sphere}  are necessary for the Fisher consistency, a further discussion on why \ref{eq:reweight_measure} and \ref{eq:proj_sphere} should \textit{not} be applied in isolation is provided in Appendix \ref{appendix:two_step}.

\subsection{$\lwise$: $\ell_2$ Loss under Measure Transformation}

Having defined the transformation $P \to Q$, we advocate minimizing the risk on $Q$ to capture the decision loss landscape, rather than the standard risk on $P$. However, explicitly transforming the dataset to match $Q$ can be computationally cumbersome. To further simplify the working pipeline, we derive the equivalent surrogate loss that operates directly on $P$. This derivation provides a practical advantage by yielding a ready-to-use loss function that can be applied to the original dataset without transformation.

Inspired by Proposition \ref{prop:decision_risk_equivalence_under_change_of_measure}, a natural strategy to derive a Fisher consistent surrogate loss is to adapt a Fisher consistent surrogate from $Q$ to the original measure $P$. We choose to work with $\ell_2$ since it is the most canonical surrogate with desirable analytical properties. Most importantly, the $\ell_2$ loss is Fisher consistent as it yields $\bbE[Y|x]$ as the Bayes risk minimizer. (In contrast, the mean absolute error (MAE) is not Fisher consistent because its Bayes risk minimizer is the conditional median). 

By formulating the $\ell_2$ risk on the transformed measure $Q$, we derive a new surrogate loss called the Weight Integrated Spherical Error ($\lwise$), as detailed in Proposition \ref{prop:l2_risk_equivalence_under_change_of_measure}.

\begin{proposition}[The Equivalent Form of $R_Q^{\ell_2}$ under $P$ Measure]
\label{prop:l2_risk_equivalence_under_change_of_measure}

    Under the same assumption in Proposition \ref{prop:decision_risk_equivalence_under_change_of_measure}, for any measurable function $f: \bbR^p \to \bbR^d$, the following identity holds:
    \begin{align*}
                &\bbE_{(X,Z) \sim Q}\left[\|f(X)-Z\|^2\right]  \\
                &= \frac{1}{C} \bbE_{(X,Y) \sim P}\left[\|Y\| \left\| f(X) - \frac{Y}{\|Y\|} \right\|^2 \right].
    \end{align*}

\end{proposition}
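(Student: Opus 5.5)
The plan is to chain two standard measure-theoretic identities: the change-of-variables formula for push-forward measures, and the Radon--Nikodym change of measure. No property of the decision loss is needed here. The argument applies to the integrand $g(x,z) := \|f(x)-z\|^2$, which is a nonnegative measurable function on $\bbR^p\times\mathbb{S}^{d-1}$ because $f$ is measurable. Since $g \ge 0$, all expectations below are well defined in $[0,\infty]$, and we never need to check integrability before manipulating them.

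\textbf{Step one: undo the push-forward.} Since $Q = \tilde{Q}\circ T^{-1}$ as in \ref{eq:proj_sphere}, the abstract change-of-variables theorem gives, for nonnegative measurable $g$,
\begin{equation*}
\bbE_{(X,Z)\sim Q}[g(X,Z)] = \int g\circ T \, d\tilde{Q} = \bbE_{(X,Y)\sim\tilde{Q}}\left[\left\|f(X)-\frac{Y}{\|Y\|}\right\|^2\right].
\end{equation*}
This requires $T$ to be measurable on $\bbR^p\times(\bbR^d\setminus\{0\})$, which holds because $T$ is continuous there. It also requires $\tilde{Q}$ to be concentrated on that set. That follows from $\tilde{Q}\ll P$ together with $P(\|Y\|=0)=0$.

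\textbf{Step two: pass from $\tilde{Q}$ to $P$.} Using the density $d\tilde{Q}/dP = \|y\|/C$ from \ref{eq:reweight_measure}, we have $\int h\, d\tilde{Q} = \int h\,\frac{\|y\|}{C}\, dP$ for nonnegative measurable $h$. Applying this with $h = g\circ T$ yields exactly $\frac{1}{C}\bbE_P\left[\|Y\|\left\|f(X)-Y/\|Y\|\right\|^2\right]$, which is the claimed identity. Here $0<C<\infty$: finiteness is assumed, and positivity follows from $\|Y\|>0$ almost surely.

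\textbf{Main obstacle.} There is no real difficulty. The only points needing care are that $T$ is defined only off $\{y=0\}$, which is a $P$-null and hence $\tilde{Q}$-null set, and that both identities are first justified for indicator functions and then extended to nonnegative measurable functions by simple functions and monotone convergence. This is the same route one would use for Proposition \ref{prop:decision_risk_equivalence_under_change_of_measure}.
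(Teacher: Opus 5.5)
Your proposal is correct and follows the same route as the paper's proof: first the change-of-variables formula for the push-forward $Q=\tilde{Q}\circ T^{-1}$, then substitution of the Radon--Nikodym density $\|y\|/C$. Your additional care points (nonnegativity of the integrand, measurability of $T$ off $\{y=0\}$, that set being $\tilde{Q}$-null, and $0<C<\infty$) are sound refinements that the paper leaves implicit.
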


Recall that $C = \bbE_P[\|Y\|]$ is a normalizing constant independent of the estimator. Proposition \ref{prop:l2_risk_equivalence_under_change_of_measure} implies that minimizing the surrogate on the right-hand side of the above equation yields the same solution as $\ell_2$ risk minimization. We formally define this new surrogate loss as:
$$\ell_{\text{WISE}}(\hat{y},y) =
\begin{cases}
\|y\| \cdot \left\| \hat{y} - \frac{y}{\|y\|} \right\|^2 & \text{if } \|y\| \neq 0 ,\\
0 & \text{if } \|y\| = 0 .
\end{cases}$$
\begin{remark}\label{remark:1}
 The $\lwise$ function is well defined in the sense that it is continuous with respect to all $\hat{y},y$ since 
$\lim_{\|y'\| \to 0} \lwise(\hat{y},y') = 0$.  Consequently, the numerical stability issue when $\|y\|$ is close to zero does not influence the implementation since either thresholding the loss to zero or clamping $\|y\|$ to a pre-specified tolerance $\epsilon$ will fix the issue. Intuitively, the issue does not exist since a near-zero cost vector does not provide effective information on decision making.\hfill$\square$
\end{remark}

Based on Remark \ref{remark:1}, without loss of generality, we assume $\|y\|$ is strictly positive almost surely and write the corresponding loss function as $\lwise (\hat{y},y) = \|y\| \cdot \left\| \hat{y} - \frac{y}{\|y\|} \right\|^2$ in the remainder of the paper.

\section{Theoretical Guarantees}

In this section, we analyze the theoretical properties of the proposed $\lwise$ loss. We impose the following standard boundedness conditions to ensure the validity of the high-probability bounds:

\begin{assumption}[Boundedness]
\label{assum:boundedness}
    We assume the following uniform bound exists:
    \begin{enumerate}
    \item Bounded predictor: There exists a constant $B > 0$ such that $0<\|f(x)\| \le B$ for all $f \in \mathcal{F}$ almost surely. 
    
    \item Bounded feasible region: There exists a constant $B_\mathcal{S} > 0$ such that $\|w\| \le B_\mathcal{S}$ for all $w \in \mathcal{S}$.
    \item Bounded cost: There exists a constant $B_y > 0$ such that $0<\|y\| \le B_y$ almost surely.

\end{enumerate}
\end{assumption}
These are standard assumptions \cite{liuRiskBoundsCalibration2021,huFastRatesContextual2022} which can be ensured, for example, by restricting the hypothesis class $\mathcal{F}$ or projecting outputs.

\subsection{Analytical Properties and Fisher Consistency}
Naturally, $\lwise$ inherits nice analytical properties from $\ell_2$, formalized in the following proposition.
\begin{proposition}
\label{prop:lwise_properties}
Under Assumption \ref{assum:boundedness}, for any fixed $y$, the function $\lwise(\cdot, y)$ is continuously differentiable, convex, and satisfies the following properties:
\begin{enumerate}
    \item \textbf{Smoothness:} $\lwise$ is $2B_y$-smooth.
    \item \textbf{Local Lipschitz Continuity:} $\lwise$ is $L$-Lipschitz on the bounded domain, with constant $L = 2B_y(B + 1)$.
    \item \textbf{Boundedness:} $\lwise$ is bounded by $B_y (B+1)^2$.
    
\end{enumerate}
\end{proposition}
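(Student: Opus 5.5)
Fix $y$ with $0<\|y\|\le B_y$ and write $z = y/\|y\|$, so that $\|z\|=1$. Then
\begin{equation*}
\lwise(\hat{y},y) = \|y\|\,\|\hat{y}-z\|^2 = \|y\|\bigl(\|\hat{y}\|^2 - 2 z\tr \hat{y} + 1\bigr),
\end{equation*}
which is a quadratic polynomial in $\hat{y}$. Hence $\lwise(\cdot,y)$ is $C^\infty$, and in particular continuously differentiable. Its gradient and Hessian are
\begin{equation*}
\nabla_{\hat{y}}\lwise(\hat{y},y) = 2\|y\|(\hat{y}-z), \qquad \nabla^2_{\hat{y}}\lwise(\hat{y},y) = 2\|y\| I_d .
\end{equation*}
Convexity follows because the Hessian is positive semidefinite (indeed positive definite since $\|y\|>0$). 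Equivalently, $\lwise(\cdot,y)$ is a nonnegative multiple of the convex function $\|\cdot - z\|^2$.

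\textbf{Smoothness.} The gradient is affine in $\hat{y}$. Thus $\|\nabla\lwise(\hat{y}_1,y)-\nabla\lwise(\hat{y}_2,y)\| = 2\|y\|\,\|\hat{y}_1-\hat{y}_2\| \le 2B_y\|\hat{y}_1-\hat{y}_2\|$, using $\|y\|\le B_y$ from Assumption \ref{assum:boundedness}. So the loss is $2B_y$-smooth; this holds globally, not only on the bounded domain.

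\textbf{Local Lipschitz continuity.} Restrict to the domain $\{\hat{y}:\|\hat{y}\|\le B\}$, which contains all predictions by Assumption \ref{assum:boundedness}. On this set the triangle inequality gives $\|\nabla\lwise(\hat{y},y)\| \le 2\|y\|(\|\hat{y}\|+\|z\|) \le 2B_y(B+1)$. The domain is a ball, hence convex, so the mean value inequality along the segment between any two points yields $|\lwise(\hat{y}_1,y)-\lwise(\hat{y}_2,y)| \le 2B_y(B+1)\|\hat{y}_1-\hat{y}_2\|$. This gives $L = 2B_y(B+1)$. The only point needing care is that the segment stays inside the domain, which is where convexity of the ball is used.

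\textbf{Boundedness.} For $\|\hat{y}\|\le B$ we have $0\le \lwise(\hat{y},y) = \|y\|\,\|\hat{y}-z\|^2 \le B_y(\|\hat{y}\|+1)^2 \le B_y(B+1)^2$.

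None of these steps is a real obstacle. The only subtleties are that the Lipschitz and boundedness constants rely on restricting predictions to the ball of radius $B$ (the loss grows quadratically outside it), and that $\|z\|=1$, which requires $\|y\|>0$ as assumed.
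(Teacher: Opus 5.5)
Your proof is correct and follows essentially the same route as the paper: you compute the gradient $2\|y\|\hat{y}-2y$ and Hessian $2\|y\|I_d$, bound the gradient's Lipschitz constant by $2B_y$, bound the gradient norm by $2B_y(B+1)$ on the ball of radius $B$, and use the triangle inequality to get $B_y(B+1)^2$. You also spell out convexity, the $C^1$ property, and the role of the ball's convexity in the mean value step, all of which the paper leaves implicit.
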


By the properties in Proposition \ref{prop:lwise_properties}, the $\lwise$ loss can be efficiently minimized using first-order methods. 

In least squares minimization without any restrictions on the prediction class $\mathcal{F}$, a well-known result is that the Bayes risk minimizer is exactly the conditional expectation. When minimizing the $\lwise$, a similar result holds, as shown in the following proposition.

\begin{proposition}[Bayes Risk Minimizer of $\lwise$]
\label{prop:wise_bayes_risk_minimizer}
    Suppose Assumption \ref{assum:boundedness} holds. The Bayes risk minimizer of $\lwise$ under $P$ measure is given by 
    
    $$f^*(x) = \frac{\bbE_P[Y\mid x]}{\bbE_P[\|Y\| \mid x]}.$$
    
\end{proposition}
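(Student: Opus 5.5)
The plan is to minimize the conditional WISE risk pointwise in $x$. By the tower property,
\[
\rwise_P(f)=\bbE_{X}\Big[\bbE_P\big[\|Y\|\,\|f(X)-Y/\|Y\|\|^2 \,\big|\, X\big]\Big].
\]
When $\mathcal{F}$ is unrestricted (all measurable maps), it therefore suffices, for $P_X$-almost every $x$, to minimize the function
\[
g_x(u)=\bbE_P\big[\|Y\|\,\|u-Y/\|Y\|\|^2 \,\big|\, x\big], \qquad u\in\bbR^d,
\]
and then take $f^*(x)$ to be the minimizer. Measurability of the resulting $f^*$ will follow from its closed form, since it is a ratio of regular conditional expectations.

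Next I expand the square, using $\|Y/\|Y\|\|=1$:
\[
\|Y\|\,\|u-Y/\|Y\|\|^2=\|Y\|\,\|u\|^2-2u\tr Y+\|Y\|.
\]
All three terms are integrable by Assumption~\ref{assum:boundedness}, since $\|Y\|\le B_y$. Taking conditional expectations gives
\[
g_x(u)=m(x)\|u\|^2-2u\tr\mu(x)+m(x), \qquad m(x)=\bbE_P[\|Y\|\mid x],\quad \mu(x)=\bbE_P[Y\mid x].
\]
Because $\|Y\|>0$ almost surely, $m(x)>0$ for almost every $x$, so $g_x$ is a strictly convex quadratic. Completing the square,
\[
g_x(u)=m(x)\,\|u-\mu(x)/m(x)\|^2+\text{const},
\]
so the unique minimizer is $u^*=\mu(x)/m(x)$, which is the claimed $f^*(x)$.

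Finally I compare against an arbitrary measurable $f$. For every $x$ we have $g_x(f(x))\ge g_x(f^*(x))$, and integrating over $x$ gives $\rwise_P(f)\ge\rwise_P(f^*)$.

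As a cross-check, Proposition~\ref{prop:l2_risk_equivalence_under_change_of_measure} gives the same answer. There the minimizer is $\bbE_Q[Z\mid x]$, and under the $\|y\|$-reweighting this equals $\bbE_P[\|Y\|\cdot Y/\|Y\|\mid x]/\bbE_P[\|Y\|\mid x]$, which is the same formula.

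The main obstacle is a technical point rather than the computation. Assumption~\ref{assum:boundedness} asks that $0<\|f(x)\|\le B$, and if $\mathcal{F}$ is restricted in this way, $f^*$ must actually lie in it. I would note that $\|f^*(x)\|\le 1$ by Jensen's inequality, since $\|\bbE[Y\mid x]\|\le\bbE[\|Y\|\mid x]$. So $f^*$ is admissible whenever $B\ge1$ and $\bbE[Y\mid x]\neq0$; otherwise I would interpret the statement as minimization over all measurable functions. The second technical point is to handle the null sets where $m(x)=0$, which have measure zero because $\|Y\|>0$ almost surely.
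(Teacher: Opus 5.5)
Your proposal is correct and follows the paper's proof essentially step for step: tower property, expansion using $\|Y/\|Y\|\|=1$, completing the square in $f(x)$ with weight $\bbE[\|Y\|\mid x]>0$, and reading off the almost-everywhere unique minimizer. Your extra remarks go beyond what the paper addresses but do not change the core argument. These are the admissibility of $f^*$ under the constraint $0<\|f(x)\|\le B$ from the boundedness assumption, and the null set where $\bbE[\|Y\|\mid x]=0$.
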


Proposition \ref{prop:wise_bayes_risk_minimizer} shows that minimizing the $\lwise$ risk yields a weighted conditional expectation. Note that $f^*(x)$ always lies within the unit ball, since $\|f^*(x)\| = \frac{\|\bbE_P[Y\mid x]\|}{\bbE_P[\|Y\| \mid x]}\le 1$, where the inequality follows from Jensen’s inequality.

Finally, we conclude with a Fisher consistency result.

\begin{theorem}[Fisher Consistency]
\label{thm:fisher_consistency}
Suppose Assumption \ref{assum:boundedness} holds. Let $f^*$ be the $\lwise$ Bayes risk minimizer under measure $P$, as stated in Proposition \ref{prop:wise_bayes_risk_minimizer}, then $f^*$ also minimizes the decision risk, i.e.,

\[
    f^* \in \arg\min_{f: \bbR^p \to \mathbb{R}^d} \mathbb{E}_{(X,Y) \sim P} [\ldec(f(X), Y)].
\]
\end{theorem}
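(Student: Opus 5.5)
The plan is to reduce the statement to a pointwise (conditional) comparison of decision risks. For any measurable $f$, the tower property gives
\[
\rdec_P(f)=\bbE_X\big[\bbE_P[Y\mid X]^\top w^*(f(X))\big]-\bbE_P\big[Y^\top w^*(Y)\big].
\]
The second term does not depend on $f$. Hence it suffices to show that, for $P_X$-almost every $x$,
\[
\bbE_P[Y\mid x]^\top w^*(f^*(x)) \le \bbE_P[Y\mid x]^\top w
\quad\text{for all } w\in\mathcal{S}.
\]
Taking $w=w^*(f(x))$ and integrating over $x$ then gives $\rdec_P(f^*)\le \rdec_P(f)$ for every $f$, which is the claim. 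Measurability of $x\mapsto w^*(f(x))$ is taken for granted, as it is implicitly throughout the paper via the fixed tie-breaking rule.

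The key step uses the explicit form of the Bayes minimizer from Proposition \ref{prop:wise_bayes_risk_minimizer},
\[
f^*(x)=\frac{\bbE_P[Y\mid x]}{\bbE_P[\|Y\|\mid x]}.
\]
Under Assumption \ref{assum:boundedness}, $\|Y\|>0$ almost surely and $\|Y\|\le B_y$, so $\alpha(x):=\bbE_P[\|Y\|\mid x]$ is finite and strictly positive for almost every $x$. Thus $f^*(x)=\bbE_P[Y\mid x]/\alpha(x)$ is a positive rescaling of the conditional mean. By the scale-invariance identity $w^*(\alpha\hat y)=w^*(\hat y)$ for $\alpha>0$ (Appendix \ref{append:scale_invariant}), we get $w^*(f^*(x))=w^*(\bbE_P[Y\mid x])$. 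This point is by definition a minimizer of $\bbE_P[Y\mid x]^\top w$ over $\mathcal{S}$, which is exactly the pointwise inequality needed.

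The main obstacle is the degenerate case $\bbE_P[Y\mid x]=0$. There $f^*(x)=0$, and $w^*(0)$ is determined only by the tie-breaking rule; Assumption \ref{assum:boundedness} requires $\|f(x)\|>0$ only for $f\in\mathcal{F}$, whereas here $f$ ranges over all functions. In this case, however, every $w\in\mathcal{S}$ gives conditional cost $0^\top w=0$, so the pointwise inequality holds trivially whatever $w^*(0)$ is. I would state this case separately. I would also note that the minimum over $\mathcal{S}$ exists because $\mathcal{S}$ is compact and nonempty, so $w^*$ is well defined.

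An alternative route would go through Propositions \ref{prop:decision_risk_equivalence_under_change_of_measure} and \ref{prop:l2_risk_equivalence_under_change_of_measure}: $f^*$ is the $\ell_2$ Bayes minimizer under $Q$, namely $\bbE_Q[Z\mid x]$, and the decision risks under $P$ and $Q$ are proportional. That route still needs the fact that the conditional-mean plug-in is decision-optimal. So I would present the direct pointwise argument above and mention the $Q$-measure interpretation only as a remark.
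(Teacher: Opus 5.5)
Your proposal is correct and follows the paper's argument: $f^*$ is a positive rescaling of $\bbE_P[Y\mid x]$, so scale invariance gives $w^*(f^*(x))=w^*(\bbE_P[Y\mid x])$, and this decision is pointwise optimal for the conditional expected cost. The tower-property reduction, the check that $\bbE_P[\|Y\|\mid x]>0$ almost surely, and the separate treatment of $\bbE_P[Y\mid x]=0$ spell out steps the paper leaves implicit.
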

Theorem \ref{thm:fisher_consistency} establishes Fisher consistency without imposing any restrictions on the hypothesis class. In the next section, we study non-asymptotic risk bounds under finite samples and a structured prediction class.

\subsection{Excess Regret Bound}

In this section, we provide a finite-sample excess regret bound for $\lwise$. We start the analysis with a calibration argument, given below:

\begin{proposition}[Excess Regret Calibration]
\label{prop:calibration}
Let $f^*(x) = \frac{\mathbb{E}_P[Y \mid x]}{\mathbb{E}_P[ \|Y\| \mid x]}$ be the true normalized conditional expectation (the Bayes risk minimizer of $\lwise$). Suppose Assumption \ref{assum:boundedness} holds. Then, for any predictor $f$, the excess decision regret satisfies the following bound:
$$
    \rdec_P(f) - R_P^* \le 2 B_{\mathcal{S}} B_y^{\frac{1}{2}} \sqrt{\rwise_P(f)-\rwise_P(f^*)}.
$$
\end{proposition}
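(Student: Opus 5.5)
The plan is to reduce both sides to pointwise (conditional on $X=x$) quantities and compare them with a standard argument for plug-in decisions. Write $\bar y(x)=\bbE_P[Y\mid x]$ and $m(x)=\bbE_P[\|Y\|\mid x]$, so that $f^*(x)=\bar y(x)/m(x)$.

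\textbf{Decision side.} By the tower property,
\[
\rdec_P(f)-R_P^* = \bbE_X\bigl[\bar y(X)\tr\bigl(w^*(f(X))-w^*(\bar y(X))\bigr)\bigr].
\]
Since $m(x)>0$, scale invariance gives $w^*(\bar y(x))=w^*(f^*(x))$, so the integrand equals
\[
m(x)\, f^*(x)\tr\bigl(w^*(f(x))-w^*(f^*(x))\bigr).
\]
Write $\hat w=w^*(f(x))$ and $w^\circ=w^*(f^*(x))$. Optimality of $\hat w$ for $f(x)$ gives $f(x)\tr(\hat w-w^\circ)\le 0$. Adding this term, the integrand is at most
\[
m(x)\,(f^*(x)-f(x))\tr(\hat w-w^\circ)\le 2B_{\mathcal S}\, m(x)\,\|f(x)-f^*(x)\|,
\]
using $\|\hat w-w^\circ\|\le 2B_{\mathcal S}$ from Assumption \ref{assum:boundedness}.

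\textbf{Surrogate side.} Expand the conditional $\lwise$ risk at $x$ for a value $u$:
\[
\bbE[\|Y\|\,\|u-Y/\|Y\|\|^2\mid x]= m(x)\|u\|^2-2u\tr\bar y(x)+m(x).
\]
This is a quadratic in $u$ with curvature $m(x)$ and minimizer $f^*(x)$, consistent with Proposition \ref{prop:wise_bayes_risk_minimizer}. Completing the square yields the exact identity
\[
\rwise_P(f)-\rwise_P(f^*)=\bbE_X\bigl[m(X)\,\|f(X)-f^*(X)\|^2\bigr].
\]
This step is routine, and it is the key to linking the two sides.

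\textbf{Combining.} Apply Cauchy--Schwarz to the decision bound:
\[
\bbE[m\|f-f^*\|]\le \sqrt{\bbE[m]}\,\sqrt{\bbE[m\|f-f^*\|^2]}.
\]
Since $\bbE[m(X)]=\bbE\|Y\|\le B_y$, this gives
\[
\rdec_P(f)-R_P^*\le 2B_{\mathcal S}B_y^{1/2}\sqrt{\rwise_P(f)-\rwise_P(f^*)},
\]
which is exactly the claimed constant.

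The main subtlety is choosing how to split the weight. Using $m$ directly, via $m\le B_y$, would lose the square-root structure. The right move is $m=\sqrt m\cdot\sqrt m$ inside Cauchy--Schwarz: one factor goes with $\|f-f^*\|$ to match the weighted excess-risk identity, and the other is bounded through $\bbE[m(X)]\le B_y$. A minor point is tie-breaking, which requires $w^*(\bar y)=w^*(f^*)$ to hold exactly. This follows because the tie-breaking rule is applied consistently to the same argmin set, and that set is invariant under positive scaling.
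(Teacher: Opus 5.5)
Your proof is correct and follows the paper's argument essentially step for step. Both proofs pass to $\bbE_P[\|Y\|\mid X]\, f^*(X)\tr\bigl(w^*(f(X))-w^*(f^*(X))\bigr)$ via scale invariance, use the optimality of $w^*(f(X))$ followed by Cauchy--Schwarz with $\|w\|\le B_{\mathcal S}$, and close with the weighted identity of Lemma~\ref{lem:ewise_equivalent_form}. The only difference is the last step: the paper applies Jensen and then the pointwise bound $m(X)^2\le B_y\, m(X)$, where $m(X)=\bbE_P[\|Y\|\mid X]$ as in your notation; your split $m=\sqrt m\cdot\sqrt m$ instead gives the slightly sharper factor $\sqrt{\bbE_P\|Y\|}$, which needs only a first-moment bound. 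So your remark that using $m\le B_y$ ``loses the square-root structure'' holds only if the bound is applied before taking the square root; the paper's order works fine.
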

This excess regret calibration method provides the same order of calibration using the $\ell_2$ surrogate \cite{huFastRatesContextual2022} and the SPO+ surrogate \cite{liuRiskBoundsCalibration2021}.

Next, we provide an excess risk bound based on a hypothesis class of bounded VC-type dimension. We first give the following definition and assumption used in \cite{huFastRatesContextual2022}, which is a generalization of VC-dimension to vector-valued functions.

\begin{definition}[VC-Linear-Subgraph Dimension]
    The VC-linear-subgraph dimension of a class of functions $\calF \subseteq[\bbR^p\to\bbR^d]$ is the VC dimension of the sets $\calF^\circ=\{\{(x,\beta,t):\beta\tr f(x)\leq t\}:f\in\calF\}$ in $\bbR^{p+d+1}$,i.e., the largest integer $\nu$ for which there exist $x_1,\dots,x_\nu\in\bbR^p,\,\beta_1,\dots,\,\beta_\nu\in\bbR^d,\,t_1\in\bbR,\,\dots,\,t_\nu\in\bbR$ such that

\[
\begin{aligned}
&\{(\mathbbm{1}[\beta_1\tr f(x_1)\leq t_1],\dots,
\mathbbm{1}[\beta_\nu\tr f(x_\nu)\leq t_\nu]):f\in\calF\} \\
&\qquad = \{0,1\}^\nu.
\end{aligned}
\]

\end{definition}

Equipped with the new definition and preliminary calibration result, we obtained the following excess regret bound for a bounded VC-dimension hypothesis class.

\begin{theorem}[Excess Regret Bound]
\label{thm:excess_regret}
Suppose Assumption~\ref{assum:boundedness} holds and assume $f^*\in\calF$. Let
$
\hat f_n \in 
\arg\min_{f\in\mathcal F}
\frac1n\sum_{i=1}^n
\ell_{\rm WISE}(f(X_i),Y_i)
$ denote the empirical risk minimizer, assume $\calF$ is star-shaped at $f^*$, i.e.\ $(1-\lambda)f+\lambda f^*\in\calF$ for all $f\in\calF$ and $\lambda\in[0,1]$ and the VC-linear-subgraph dimension of $\calF$ is at most $\nu$.
Then there exist constants $C_0,C_1,C_2>0$ depending only on the boundedness constants $(B,B_y,B_\mathcal S)$ such that for all $n\ge C_0$ and all $\delta\in(0,1/2]$,
with probability at least $1-\delta$,
\begin{align*}
\rdec_P(\hat f_n) - R_P^*
\ \le\
 C_2 \sqrt{\frac{\nu\log(C_1 n)+\log(1/\delta)}{n}}.
\end{align*}
\end{theorem}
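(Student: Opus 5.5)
The proof combines the calibration inequality of Proposition~\ref{prop:calibration} with a fast-rate bound on the excess $\lwise$ risk of the empirical risk minimizer. By Proposition~\ref{prop:calibration},
\[
\rdec_P(\hat f_n)-R_P^*\le 2B_{\mathcal S}B_y^{1/2}\sqrt{\rwise_P(\hat f_n)-\rwise_P(f^*)} .
\]
It is therefore enough to show that, with probability at least $1-\delta$,
\[
\rwise_P(\hat f_n)-\rwise_P(f^*)\lesssim \frac{\nu\log(C_1 n)+\log(1/\delta)}{n}.
\]
Taking the square root then yields the claimed $\sqrt{(\nu\log n+\log(1/\delta))/n}$ rate, with $C_2$ absorbing $2B_{\mathcal S}B_y^{1/2}$ and the constant in the surrogate bound.

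To obtain this fast rate I will use localized Rademacher complexity for the excess loss class
\[
\mathcal G=\{g_f(x,y)=\lwise(f(x),y)-\lwise(f^*(x),y): f\in\calF\}.
\]
The first task is a Bernstein-type variance condition, which comes from the quadratic structure of the loss. Expanding,
\[
g_f=\|y\|\bigl(\|f(x)\|^2-\|f^*(x)\|^2\bigr)-2(f(x)-f^*(x))\tr y .
\]
Conditioning on $x$ and using $\bbE[Y\mid x]=\bbE[\|Y\|\mid x]\,f^*(x)$ from Proposition~\ref{prop:wise_bayes_risk_minimizer} gives
\[
\bbE[g_f]=\bbE\bigl[\bbE[\|Y\|\mid X]\,\|f(X)-f^*(X)\|^2\bigr].
\]
On the other side,
\[
|g_f|=\|y\|\,\bigl|(f-f^*)\tr(f+f^*-2y/\|y\|)\bigr|\le B_y(2B+2)\|f-f^*\|,
\]
so $\bbE[g_f^2]\le B_y^2(2B+2)^2\,\bbE\|f-f^*\|^2$. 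Comparing the two requires a lower bound on $\bbE[\|Y\|\mid X]$, which Assumption~\ref{assum:boundedness} does not provide. I would avoid this by working with the weighted norm: write $\|h\|_w^2=\bbE[\|Y\|\,\|h(X)\|^2]$, which equals $\bbE[g_f]$ for $h=f-f^*$. Then
\[
g_f^2\le \|y\|\,\|f-f^*\|^2\cdot \|y\|(2B+2)^2\le B_y(2B+2)^2\,\|y\|\,\|f-f^*\|^2,
\]
which gives $\bbE[g_f^2]\le B_y(2B+2)^2\,\bbE[g_f]$. This is the Bernstein condition with constant depending only on $(B,B_y)$. Since $\lwise$ is $L$-Lipschitz and bounded (Proposition~\ref{prop:lwise_properties}), the class is uniformly bounded.

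With the Bernstein condition in hand, I would apply a standard localization theorem (Bartlett–Bousquet–Mendelson, or the version used in \cite{huFastRatesContextual2022}). This gives $\rwise_P(\hat f_n)-\rwise_P(f^*)\lesssim r^*+\log(1/\delta)/n$, where $r^*$ is the fixed point of the local Rademacher complexity of the star hull of $\mathcal G$. Star-shapedness of $\calF$ at $f^*$ makes $\mathcal G$ star-shaped at zero, so the localization is valid without taking a star hull. To bound $r^*\lesssim \nu\log n/n$, I will control covering numbers of $\mathcal G$ in $L_2(P_n)$ via the VC-linear-subgraph dimension. The components of $f$, and more generally the projections $\beta\tr f$, have pseudo-dimension at most $\nu$. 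The loss $g_f$ is a smooth function of $f(x)$ on a bounded set, namely a quadratic $\|y\|\|f\|^2-2y\tr f$. So the covering number of $\mathcal G$ is bounded by $(C/\epsilon)^{O(\nu)}$, either by Lipschitz composition or by writing $\|f\|^2=\sum_j f_j^2$ and using the linear-subgraph covering bound of \cite{huFastRatesContextual2022}. Dudley's entropy integral then gives a local complexity of order $\sqrt{r\nu\log(n)/n}$, whose fixed point is $r^*\asymp \nu\log(C_1n)/n$. The requirement $n\ge C_0$ ensures the fixed-point and log terms are in the valid regime.

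I expect the main obstacle to be the covering-number step. The VC-linear-subgraph dimension directly controls linear functionals $\beta\tr f$, but $g_f$ also involves the quadratic term $\|f\|^2$. I would first try to cover $\calF$ in the vector $L_2(P_n)$ norm $\sqrt{n^{-1}\sum_i\|f(x_i)-f'(x_i)\|^2}$, using the bound from \cite{huFastRatesContextual2022} for linear-subgraph classes. I would then transfer this cover to $\mathcal G$ through the pointwise Lipschitz bound $|g_f-g_{f'}|\le L\|f(x)-f'(x)\|$, which avoids handling the quadratic separately.
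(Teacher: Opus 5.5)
Your proof is correct in outline but takes a different route from the paper. It has one false intermediate claim, which is easy to repair. You reach the same intermediate target, an $O((\nu\log n+\log(1/\delta))/n)$ bound on $\rwise_P(\hat f_n)-\rwise_P(f^*)$, and then apply Proposition~\ref{prop:calibration}.

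The paper proceeds as follows:
\begin{itemize}
\item It rewrites $\lwise(f(x),y)=\|\sqrt{\|y\|}f(x)-y/\sqrt{\|y\|}\|^2$ as a plain squared loss over the class $\{\sqrt{\|y\|}f(x):f\in\calF\}$.
\item It derives a Bernstein condition with $B_0=4B_y(B+1)^2$ from a generic lemma for bounded squared loss under star-shapedness.
\item It converts this to the strong central condition via Theorem~5.4 of van Erven et al.
\item It shows the VC-linear-subgraph dimension does not grow under the reparametrization.
\item It invokes their Theorem~7.7 as a black box.
\end{itemize}

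Comparison with your argument:
\begin{itemize}
\item Your weighted norm $\bbE[\|Y\|\,\|h(X)\|^2]$ is exactly the paper's $\bbE\|\sqrt{\|Y\|}(f(X)-f^*(X))\|^2$, and you get the same constant.
\item You obtain the variance condition directly from the Bayes optimality of $f^*$, where the cross term vanishes identically. So star-shapedness is not needed at that step; the paper uses it only because its lemma is phrased for a minimizer over the class.
\item Your localized-Rademacher route trades the black box for an explicit entropy computation.
\item Your Lipschitz transfer $|\lwise(f(x),y)-\lwise(f'(x),y)|\le 2B_y(B+1)\|f(x)-f'(x)\|$, from a vector empirical-$L_2$ cover of $\calF$, is a more concrete justification of the ``VC-type entropy of the loss class'' step. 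The paper only asserts that step after its dimension-transfer lemma.
\end{itemize}

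\textbf{The false claim.} Star-shapedness of $\calF$ at $f^*$ does not make the excess-loss class star-shaped at zero, because the loss is quadratic in $f$. Write $h=f-f^*$ and let $g_f$ be your excess loss. Then $g_{f^*+\lambda h}=\lambda g_f-\lambda(1-\lambda)\|y\|\,\|h(x)\|^2\neq\lambda g_f$. So you cannot identify the star hull in the Bartlett--Bousquet--Mendelson fixed point with the class itself.

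Two routine fixes are available:
\begin{itemize}
\item Keep the star hull. Its empirical $L_2$ entropy exceeds that of the class by at most an additive $O(\log(b/\epsilon))$, where $b=B_y(B+1)^2$ bounds the range. The fixed point therefore remains of order $\nu\log n/n$.
\item Alternatively, localize over the genuinely star-shaped class $\{f-f^*:f\in\calF\}$ in your weighted norm. This is where star-shapedness of $\calF$ is actually useful.
\end{itemize}

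\textbf{A smaller point.} Covering $\calF$ in the vector empirical $L_2$ norm through the VC-linear-subgraph dimension naturally goes via the scalar class $(x,\beta)\mapsto\beta\tr f(x)$ evaluated at the $nd$ points $(x_i,e_j)$. This introduces a $\log d$ factor; compare the $(nd+1)^{-C_0}$ restriction in Corollary~\ref{cor:risk_bound_stronly_cvx}, inherited from Hu, Kallus and Mao. You would need to track this if you want $C_1$ to depend only on $(B,B_y,B_{\mathcal S})$. The paper's proof does not address this point explicitly either.
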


This bound matches the classic $O(n^{-\frac{1}{2}})$ order in past literature \cite{huFastRatesContextual2022, balghitiGeneralizationBoundsPredictThenOptimize2022}.

\subsection{Faster Rates under Strongly Convex Level Sets}
\label{sec:str_cvx_sets}
The convergence results in the previous section work for general feasible regions (e.g., a polyhedron). However, the rate can be improved if the feasible region has additional structures. In particular, we study the case where the feasible region can be defined as the level set of strongly convex functions, as stated in the assumption below:

\begin{assumption}[Strongly Convex Level Set]
\label{assu:strongly_cvx_set}
Let $g:\bbR^d \to \bbR$ be a function that is $\mu$-strongly convex and $L$-smooth function where $L\geq \mu >0$. Assume the feasible region is defined by $\mathcal{S} = \{w\in \bbR^d: g(w)\leq r \}$ for some $r>g_{\min} = \min_w g(w)$.
\end{assumption}

This construction of a feasible region guarantees a smooth transition between prediction and decision. Typical examples for strongly convex level sets include bounded $\ell_2$ norm balls and ellipsoids; see \cite{liuRiskBoundsCalibration2021} for details.

\begin{corollary}
\label{cor:risk_bound_stronly_cvx}
Suppose that Assumption \ref{assum:boundedness}  and Assumption \ref{assu:strongly_cvx_set} hold. Further assume there exists a uniform constant $\gamma>0$ such that $\| \bbE_P[Y|x] \|\geq \gamma$ for all $x$ almost surely. Then the excess decision regret satisfies the following calibration bound
$$
    \rdec_P(f) - R_P^* \le  M(\rwise_P(f)-\rwise_P(f^*)).
$$
where $M = \frac{2 B_y^2 L^2 \sqrt{2(r-g_{\min})}}{\gamma^2 \mu^{2.5}}$.

Moreover, if the assumptions in Theorem \ref{thm:excess_regret} hold, then there exist positive constants $C_0,C_1,C_2$ depending only on the boundedness and geometry constants  such that for any $\delta \leq (nd+1)^{-C_0}$, with probability at least $1-C_1 \delta^\nu$,
    $$
    \rdec_P(\hat{f_n}) - R_P^* \leq C_2 \frac{\nu \log(1/\delta)}{n}
    $$

\end{corollary}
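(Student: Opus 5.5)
The plan is to prove the calibration bound first, pointwise in $x$, and then combine it with the fast-rate machinery of \cite{huFastRatesContextual2022} for the high-probability statement. Fix $x$ and write $\mu(x)=\bbE_P[Y\mid x]$, $m(x)=\bbE_P[\|Y\|\mid x]$, so that $f^*(x)=\mu(x)/m(x)$ by Proposition~\ref{prop:wise_bayes_risk_minimizer}.

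\textbf{Reducing both excess risks to conditional quantities.} Since the decision loss is linear in $y$ for a fixed decision, the conditional excess regret at $x$ is
\[
\Delta_{\rm dec}(x)=\mu(x)\tr\bigl(w^*(f(x))-w^*(\mu(x))\bigr).
\]
By scale invariance, $w^*(\mu(x))=w^*(f^*(x))$. Expanding the square in $\lwise$ gives
\[
\lwise(\hat y,y)=\|y\|\|\hat y\|^2-2\hat y\tr y+\|y\|.
\]
Its conditional risk is therefore $m(x)\|\hat y\|^2-2\hat y\tr\mu(x)+m(x)$. The conditional $\lwise$ excess is then exactly
\[
\Delta_{\rm wise}(x)=m(x)\|f(x)-f^*(x)\|^2 .
\]
So it suffices to show $\Delta_{\rm dec}(x)\le M\,\Delta_{\rm wise}(x)$ pointwise and then integrate over $x$.

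\textbf{Pointwise regret bound on a strongly convex level set.} For a strongly convex level set, Lemma/Theorem results in \cite{liuRiskBoundsCalibration2021} give Lipschitz continuity of the normalized solution map. The plan is to use the estimate
\[
c\tr\bigl(w^*(\hat c)-w^*(c)\bigr)\le \frac{\kappa}{\|c\|}\,\|\hat c-c\|^2 .
\]
This quadratic form of the regret bound arises from two facts. First, optimality of $w^*(c)$ makes the first-order term vanish. Second, the solution map $c\mapsto w^*(c)$ is Lipschitz away from $0$ with constant of order $L\sqrt{2(r-g_{\min})}/(\mu^{1.5}\|c\|)$. I would derive this by writing $w^*(c)$ via KKT, $\nabla g(w^*)=-\lambda c$, and using $\mu$-strong convexity to bound both $\lambda$ and the curvature of the boundary.

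Then I apply the estimate with $c=f^*(x)$ and $\hat c=f(x)$, and rescale by $m(x)$ using $\mu(x)=m(x)f^*(x)$. This gives
\[
\Delta_{\rm dec}(x)\le \kappa\,\frac{m(x)}{\|f^*(x)\|}\|f(x)-f^*(x)\|^2=\frac{\kappa}{\|f^*(x)\|}\Delta_{\rm wise}(x).
\]
The prefactor is controlled by $\|f^*(x)\|=\|\mu(x)\|/m(x)\ge\gamma/B_y$. Tracking the constants from the level-set geometry, together with the extra factors of $B_y/\gamma$ coming from normalizing $c$, should produce $M=\frac{2B_y^2L^2\sqrt{2(r-g_{\min})}}{\gamma^2\mu^{2.5}}$.

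I expect the main obstacle to be this quadratic regret bound with explicit constants. In particular, the direction-perturbation step needs care. I would first bound $\|w^*(\hat c)-w^*(c)\|$ in terms of $\|\hat c/\|\hat c\|-c/\|c\|\|$, and then use
\[
\|\hat c/\|\hat c\|-c/\|c\|\|\le 2\|\hat c-c\|/\|c\|.
\]

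\textbf{High-probability rate.} The argument follows the fast-rate analysis of \cite{huFastRatesContextual2022}. By Proposition~\ref{prop:lwise_properties}, the loss $\lwise$ is bounded and Lipschitz. It also satisfies a Bernstein-type condition, because the excess loss has variance bounded by a constant times $\bbE\|f-f^*\|^2$. That variance is in turn at most $(B_y/\gamma)\cdot$ (excess $\lwise$ risk), using $m(x)\ge\|\mu(x)\|\ge\gamma$.

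Combining the star-shaped assumption, the VC-linear-subgraph local Rademacher (critical radius) bound, and a Talagrand-type concentration inequality yields, with probability at least $1-C_1\delta^\nu$,
\[
\rwise_P(\hat f_n)-\rwise_P(f^*)\lesssim \frac{\nu\log(1/\delta)}{n}.
\]
This mirrors their squared-loss result, and I would reuse their lemma on the critical radius for $\calF-f^*$ under the weighted norm. Multiplying by $M$ from the calibration bound then finishes the proof.
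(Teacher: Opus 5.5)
Your calibration argument is the paper's argument. The paper also reduces the excess regret to $\bbE_P[m(X)\,f^*(X)\tr(w^*(f(X))-w^*(f^*(X)))]$. It then applies both inequalities of Lemma~\ref{lm:spo-strong-lb-ub-special} with $c_1=f^*(X)$, $c_2=f(X)$, uses Lemma~\ref{lem:norm_equiv} with $\|f^*(X)\|\ge\gamma/B_y$, and closes with Lemma~\ref{lem:ewise_equivalent_form}.

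The only difference is bookkeeping. The paper drops the factor $\|f^*(X)\|$ via $\|f^*(X)\|\le 1$ and pays $\|f^*(X)\|^{-2}\le B_y^2/\gamma^2$ for the normalization. You cancel one power of $\|f^*(X)\|$ and pay only $B_y/\gamma$.

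For the constants, take $\sqrt{2L(r-g_{\min})}/\mu$ from the cited lemma, times $2/\|c\|$ from the normalization step. Do not use your heuristic $L\sqrt{2(r-g_{\min})}/(\mu^{1.5}\|c\|)$: its $L/\mu$ dependence is different and in general it would not reproduce $M$. With the cited constant you get exactly $\kappa=2L^2\sqrt{2(r-g_{\min})}/\mu^{2.5}$, and your final constant is $\kappa B_y/\gamma=M\gamma/B_y$. This is at most $M$ because $\gamma\le\|\bbE_P[Y|x]\|\le\bbE_P[\|Y\|\,|\,x]\le B_y$. So there are no extra factors of $B_y/\gamma$ to find: the stated bound is simply a looser version of yours.

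For the high-probability part, your route genuinely differs from the paper's.

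The paper re-proves nothing. It reuses the WISE rate \eqref{eq:wise_excess_fast_rate}, which was already derived inside the proof of Theorem~\ref{thm:excess_regret}. That derivation rewrites WISE as a squared loss in $g_f=\sqrt{\|y\|}f$, gets a Bernstein condition with $B_0=4B_y(B+1)^2$ from star-shapedness, and then applies the central condition and Theorem~7.7 of \cite{vanervenFastRates2015}. The resulting bound holds with probability $1-\varepsilon$ at rate $(\nu\log(C_1n)+\log(1/\varepsilon))/n$. The paper then sets $\varepsilon=C_1'\delta^\nu$, uses $\delta\le(nd+1)^{-C_0}$ to absorb $\nu\log(C_1 n)$ into $\nu\log(1/\delta)$, and multiplies by $M$.

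You instead rerun the localized-Rademacher analysis of \cite{huFastRatesContextual2022} on the WISE excess loss. You get the Bernstein condition from $\gamma\,\bbE\|f-f^*\|^2\le\bbE[m(X)\|f-f^*\|^2]$. This is sound, for two reasons:
\begin{itemize}
\item the multiplier $\|Y\|f^*(X)-Y$ has zero conditional mean, so the squared-loss argument carries over with bounded weights;
\item the class $\{\sqrt{\|y\|}(f-f^*)\}$ still has VC-linear-subgraph dimension at most $\nu$, by the same rescaling as in Lemma~\ref{lem:VC_transfer} plus a fixed shift.
\end{itemize}
Your route gives the $1-C_1\delta^\nu$, $\delta\le(nd+1)^{-C_0}$ form directly, without the reparameterization. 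However, it is substantially more work than the paper's short reduction. It also spends the $\gamma$ assumption on the Bernstein step, where the paper's reweighting needs no lower bound at all.
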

This $O(n^{-1})$ bound improves the convergence rate of the previous excess regret bound under a general feasible set. 
Moreover, our rate is faster than the $O(n^{-\frac{1}{2}})$ SPO+ surrogate risk bound under strongly convex level sets, derived by \cite{liuRiskBoundsCalibration2021}. 
\section{Experiments}

To empirically validate the performance of our method, we compare it against standard decision-blind surrogate and canonical DFL methods across a shortest path problem, a 2-dimensional knapsack problem, and a portfolio optimization problem. These tasks can represent the most common linear optimization settings, featuring either feasible regions with polyhedral sets, discrete sets, or general convex sets.

We consider learning from a linear prediction class, benchmarking the following methods: the two-stage decision-blind mean squared error (MSE), the SPO+ \cite{elmachtoub2022smart}, PFY \cite{berthetLearningDifferentiablePertubed2020}, and DBB\cite{poganvcic2019differentiation}. Note that DBB is equivalent to the forward mode of Perturbed Gradient \cite{huang2024decision}, the most recent proposed DFL surrogate. We also compare with scalable approaches, including CAVE \cite{tang2024cave} and LAVA \cite{berden2026solver}, which are designed for polyhedral feasible regions.

Regarding evaluation metrics of a predictor, we used the normalized regret on the test set $\{ (x_i,y_i) \}_{i=1}^{n_\text{test}}$, defined as 
$$
\frac{\sum_{i=1}^{n_{\text{test}}} \ldec(f(x_i), y_i)}{\sum_{i=1}^{n_{\text{test}}} |y_i^\top w^*(y_i)|}.
$$

This metric quantifies the percentage gap in decision quality compared to an oracle with full knowledge. Implementation details can be found in Appendix \ref{append:experiment}.

\subsection{2D knapsack Problem}

\begin{figure}[h]
    \centering
    \includegraphics[width=\linewidth]{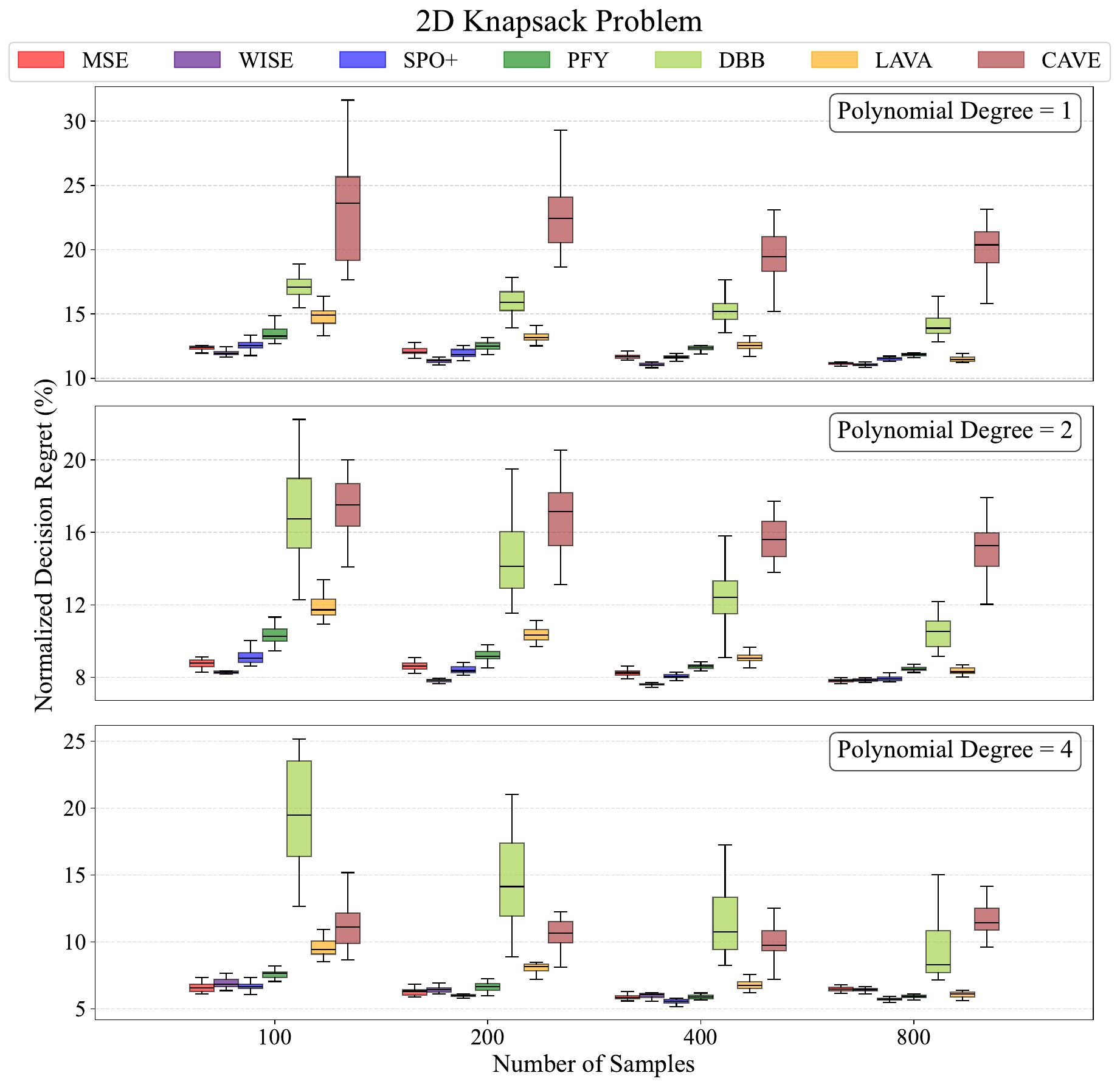}
    \caption{Normalized testing set regret vs. number of training samples for 2D knapsack problem. Results are from  20 independent trials with 100 training epochs, under varying degrees of misspecification ($degree = [1,2,4]$).}
    \label{fig:knapsack}
\end{figure}

We first studied the 2-dimensional knapsack problem following the setup in \cite{tang2024pyepo,zharmagambetov2023landscape}. The objective is to maximize the value of carried items subject to two distinct capacity constraints (e.g., weight and volume).  

This problem is a classic NP-hard combinatorial optimization problem, in which the computation cost grows exponentially with the problem scale. In the experiment, we set feature dimension $p=5$, number of items $d = 16$, and capacity $W = 20$ for each dimension. Regarding the data generating process, for each trial, the item weight follows $W_{1j},W_{2j} \overset{\text{i.i.d.}}{\sim} \rm{Unif}[3,8]$ for $j=1,\dots,d$. The input features are sampled as $x_i \overset{\text{i.i.d.}}{\sim} \mathcal{N}(0, {I}_p)$. The item value $y_{ij}$ follows $y_{ij} = f_{ij}^*(x_i)(1+\epsilon_{ij})$ where $\epsilon_{ij} \overset{\text{i.i.d.}}{\sim} \rm{Unif} [-0.5, 0.5]$, and $f_{ij}$ follows:
\[
    f^*_{ij}(x) = \frac{5}{3.5^{degree}}\left[ \left( \frac{1}{\sqrt{5}}(B^* x_i)_j + 3 \right)^{{degree}} + 1\right]
\]
where $B^* \in \left\{0, 1\right\}^{d \times p}$ has i.i.d. $\text{Bernoulli}(0.5)$ entries. The $degree$ parameter controls the degree of model misspecification. As $degree$ increases, the data-generating process becomes highly non-linear, making the linear hypothesis class increasingly misspecified.

Figure \ref{fig:knapsack} presents the experimental results. Our method produces high-quality solutions across different degrees of model misspecification and significantly outperforms the benchmarks, including MSE, especially when the training set is small ($n = 100, 200, 400$ and $\mathrm{degree} = 1, 2$). Notably, among the DFL benchmarks considered, namely SPO+, PFY, DBB, LAVA, and CAVE, our method is the only solver-free approach in the setting where only true cost vectors are observed rather than past optimal decisions. The proposed WISE loss is highly competitive in terms of both running time and decision regret.

\subsection{Shortest Path Problem}
\label{sec:shortestpath}

We adopt the shortest path experimental setup used in \cite{elmachtoub2022smart,huang2024decision,tang2024pyepo,huFastRatesContextual2022,zharmagambetov2023landscape}. The objective is to minimize travel time on a $5\times 5$ grid graph ($d=40$ edges). We evaluate the efficiency-performance trade-off by plotting average test regret against training time.

As shown in Figure \ref{fig:shortestpath}, our method is significantly faster than DFL benchmarks ($4$ to $100$ times faster). Regarding solution quality, our method outperforms DFL methods under a low misspecification (e.g., degree = 1) and remains competitive under high misspecification. When considering both metrics, our approach offers the most effective end-to-end learning solution.

It is worth noting that while the $5\times 5$ shortest path problem allows for efficient polynomial-time solutions, our method still achieves massive efficiency gains. This suggests that our computational advantage will scale significantly for harder, large-scale optimization problems.

\begin{figure}
    \centering
    \includegraphics[width=0.99\linewidth]{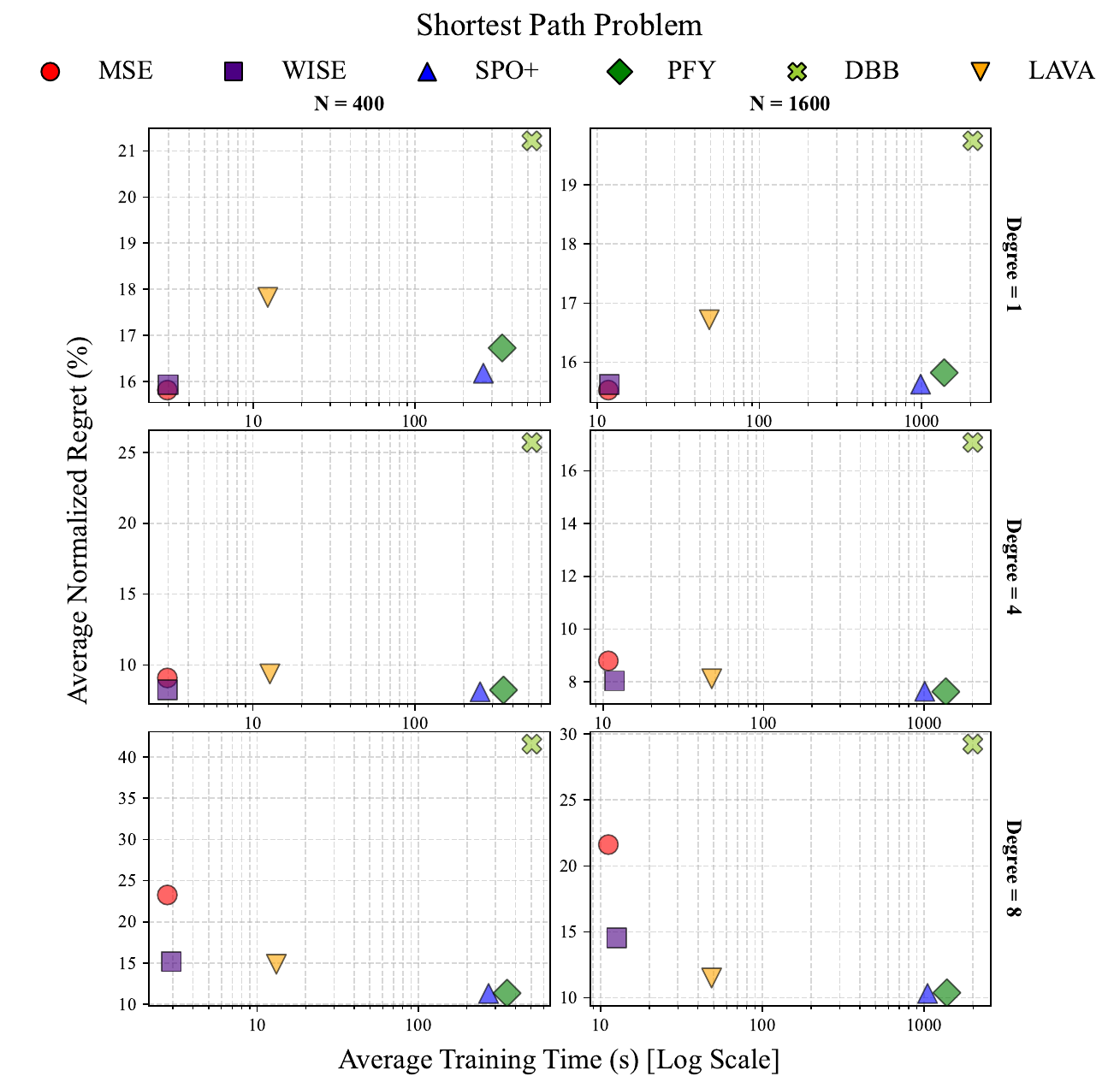}
    \caption{Average normalized regret vs. average training time for the shortest path problem. Results are from  20 independent trials with 100 training epochs, under varying degrees of misspecification ($degree = [1,4,8]$) and varying number of samples ($N = 400, 1600$).}
    \label{fig:shortestpath}
\end{figure}

\subsection{Portfolio Optimization}

\begin{figure}
    \centering
    \includegraphics[width=\linewidth]{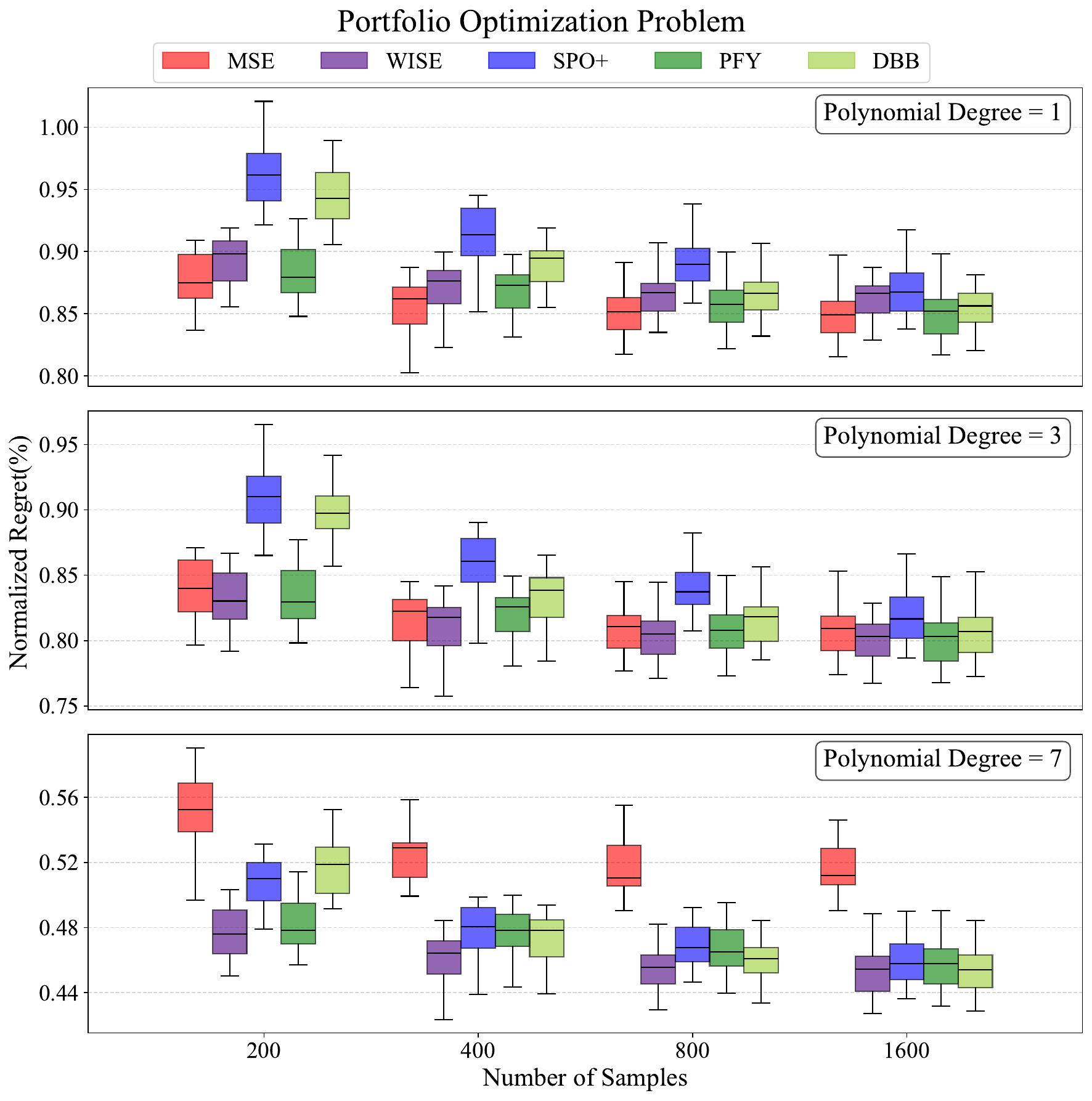}
    \caption{Normalized testing set regret vs. number of training samples for portfolio optimization problem. Results are from  20 independent trials with 100 training epochs, under varying degrees of misspecification ($degree = [1,3,7]$).}
    \label{fig:portfolio}
\end{figure}

Finally, we adopted the Markowitz model portfolio optimization problem used in \cite{elmachtoub2022smart,huang2024decision,tang2024pyepo} where the objective is to maximize the expected portfolio return of $25$ stocks subject to a constraint on the variance (risk) of the portfolio, resulting in a convex optimization with a linear objective and quadratic constraints. This experiment provides a close connection to the discussion of strongly convex level sets in Section \ref{sec:str_cvx_sets}, although we featured a mixed setting.

To evaluate robustness against extreme market events, we modify the experimental setup by sampling noise from a Student's $t$-distribution rather than the standard Gaussian noise used in the previous experiments. This setup challenges the learning algorithms to handle heavy-tailed training data.

Figure \ref{fig:portfolio} illustrates the experiment results. Unlike the shortest path and knapsack settings, the performance gaps between methods are smaller. This is expected, as the strongly convex nature of the feasible region ensures a stable mapping from prediction to decision, where small prediction errors translate to proportionally small decision regret. Nevertheless, our method maintains a consistent advantage, particularly in model misspecification regimes.

\section{Conclusion}

We presented a solver-free method that addresses the scalability issue of DFL pipelines. By applying a measure transformation that exploits the underlying geometry of linear optimization, we developed the Weight-Integrated Spherical Error ($\lwise$), which is a surrogate loss that aligns prediction accuracy with decision quality. Theoretically, we show that $\lwise$ is Fisher consistent and provide finite-sample excess risk bounds. Empirically, our method matches or exceeds state-of-the-art decision quality while being significantly faster. We advocate for this measure-transformation-based approach as a blueprint for practical and scalable implementation. Potential future directions include extending the measure transformation idea to capture more distinct problem structures.

\newpage

\section*{Impact Statement}

We propose a scalable DFL method that circumvents the need for computationally expensive optimization solvers during the training loop. This contribution offers broader social impacts in two key areas.

First, our approach promotes environmental sustainability. By making the training process ``solver-free", our approach significantly lowers the energy consumption and carbon emissions associated with training complex machine learning models for operations research.

Second, we enhance accessibility and equity in the application of advanced optimization. Traditional DFL methods face scalability issues, often restricting their deployment to large institutions with high-performance computing capabilities. However, by removing the barrier of heavy computational resources, our method allows smaller organizations or entities with limited hardware to deploy a sophisticated DFL pipeline. This could improve efficiency in critical sectors in vehicle routing, public resource allocation, etc.

\bibliography{ref}
\bibliographystyle{icml2026}

\newpage
\appendix
\onecolumn
\section{A Formal Proof of the Scale Invariant Property}

\label{append:scale_invariant}

In this section, we aim to provide a detailed, rigorous proof for $w^*(y) = w^*(\alpha y)$.

Let $\mathcal{S}$ be a non-empty compact set. We first define the optimization solution $w^*(y)$ in two steps: identifying the optimal set and applying a deterministic tie-breaking rule.

Let $W(c)$ denote the set of all optimal solutions for a cost vector $c$:
\begin{equation*}
    W(c) = \arg \min_{w \in \mathcal{S}} c^\top w = \{ w \in \mathcal{S} \mid c^\top w \le c^\top z, \forall z \in \mathcal{S} \}.
\end{equation*}
Let $\tau$ be a deterministic tie-breaking function (selection function) that maps any non-empty compact subset $A \subseteq \mathcal{S}$ to a unique element within that subset:
\begin{equation*}
    \tau: \mathcal{P}(\mathcal{S}) \setminus \{\emptyset\} \to \mathcal{S} \quad \text{such that} \quad \tau(A) \in A,
\end{equation*}
where $\mathcal{P}(\mathcal{S})$ denote the power set of $\mathcal{S}$. Common examples include choosing the solution with the minimum $\ell_2$ norm, or lexicographically ordering. Note that this rule depends only on the set $A$, not on the vector $y$ that generated it.

The unique solver $w^*(y)$ is defined as the application of this rule to the optimal set:
\begin{equation*}
    w^*(y) = \tau(W(y)).
\end{equation*}

We first show that the set of candidates $W(y)$ is invariant under positive scaling. Let $\alpha > 0$ and $w \in W(y)$. By definition:
\begin{equation*}
    y^\top w \le y^\top z, \quad \forall z \in \mathcal{S}.
\end{equation*}
Since $\alpha > 0$, we can multiply the inequality by $\alpha$ without changing its direction:
\begin{equation*}
    \alpha (y^\top w) \le \alpha (y^\top z) \iff (\alpha y)^\top w \le (\alpha y)^\top z, \quad \forall z \in \mathcal{S}.
\end{equation*}
This condition is equivalent to $w \in W(\alpha y)$. Thus, $W(y) = W(\alpha y)$.

We now evaluate the function $w^*$ at $\alpha y$:
\begin{equation*}
    \begin{aligned}
        w^*(\alpha y) &= \tau(W(\alpha y)) && \text{(By definition of the unique solver)} \\
        &= \tau(W(y)) && \text{(Since } W(\alpha y) = W(y) \text{)} \\
        &= w^*(y). && \text{(By definition of the unique solver)}
    \end{aligned}
\end{equation*}
Therefore, $w^*(y) = w^*(\alpha y)$ for all $\alpha > 0$.

\section{A Discussion of the Measure Transformation Design}
\label{appendix:two_step}
This section justifies the necessity of combining \ref{eq:reweight_measure} and \ref{eq:proj_sphere}. We argue that neither step is sufficient on its own to produce a Fisher consistent surrogate. This is established through a risk derivation analysis, complemented by a counterexample showing the failure of separated steps.

Let $Q_1$ denote the reweighted measure defined by \ref{eq:reweight_measure}. Specifically, we define $Q_1$ that is absolutely continuous with respect to $P$ ($Q_1 \ll P$), and the Radon-Nikodym derivative is defined as:
\begin{equation*}
    \frac{dQ_1}{dP}(x,y) = \frac{\|y\|}{C}.
\end{equation*}
where $C = \bbE_P[\|Y\|]$ is a normalizing constant since $\int \|y\| dP(x,y) = \bbE_P[\|Y\|]$.

Let $Q_2$ denote the projected measure defined by \ref{eq:proj_sphere}, i.e., define the mapping $T$ and the push forward measure by
\begin{equation*}
    T(x,y) = (x, \frac{y}{\|y\|}) ,\quad Q_2 = P\circ T^{-1}.
\end{equation*}

We first derive the risk expression under $Q_1$:
\begin{align*}
    \bbE_{(X,Y) \sim Q_1} \left[ \ldec(f(X),Y) \right] & = \int_{\bbR^p \times \bbR^{d}} \ldec(f(x),y)  \, dQ_1(x,y)\\
    &= \int_{\bbR^p \times \bbR^d}\ldec(f(x),y)  \,\underbrace{\frac{\|y\|}{\bbE_{(X,Y)\sim P}[\|Y\|]} \, dP(x,y)}_{d Q_1(x,y)} \quad \text{(Radon-Nikodym derivative)}\\
    &= \frac{1}{\bbE_{(X,Y)\sim P}[\|Y\|]} \int_{\bbR^p \times \bbR^d} \|y \| \ldec(f(x),y)\, dP(x,y).
\end{align*}
Apparently, the integral term does not equal the decision risk under $P$ for a general distribution. Hence, minimizing over $f$ under $Q_1$ does not yield a Fisher consistent estimator. 

Similarly, we consider the risk expression under $Q_2$:
\begin{align*}
    \bbE_{(X,Z) \sim Q_2} \left[ \ldec(f(X),Z) \right] & = \int_{\bbR^p \times \bbS^{d-1}} \ldec(f(x),z)  \, dQ_2(x,z)\\
    &= \int_{\bbR^p \times \bbR^d}\ldec(f(x),\frac{y}{\| y\|})  \, dP(x,y) \, \quad \text{(change of measure formula)}\\
    &= \int_{\bbR^p \times \bbR^d} \frac{1}{\|y \|} \ldec(f(x),y)\, dP(x,y).\quad \textnormal{(decision loss scaling property)}
\end{align*}
Similar to the previous discussion, the integral term also does not equal the decision risk under $P$ for a general distribution, and minimizing over $f$ under $Q_2$ does not yield a Fisher consistent estimator. 

After establishing the above expression, we aid it with a simple counterexample. Consider a two-dimensional setting where $X$ is constant and the feasible region is the unit ball, $\mathcal{S} = \{w \in \mathbb{R}^2: \|w\|_2 \le 1 \}$. In this context, minimizing the decision risk is equivalent to estimating the expectation of $Y$ up to a positive scalar. 

Let $Y \in \mathbb{R}^2$ take values $(4,0)$ and $(0,1)$, each with probability $\frac{1}{2}$. The true expectation is $\mathbb{E}_P[Y] = (2,\frac{1}{2})$; thus, a Fisher consistent estimator $\hat{Y}$ must satisfy $\hat{Y} = k \cdot (2,\frac{1}{2})$ for some constant $k > 0$.

Now we analyze the estimator obtained under measure $Q_1$ and $Q_2$. Under measure $Q_1$, the probabilities of $Y$ taking values $(4,0)$ and $(0,1)$ become $\frac{4}{5}$ and $\frac{1}{5}$, respectively. Consequently, minimizing Bayes risk under $Q_1$ measure yields $\hat{Y}_1 = (\frac{16}{5},\frac{1}{5})$. Under measure $Q_2$, $Y$ takes value $(1,0)$ and  $(0,1)$ both with probability $\frac{1}{2}$. Hence, minimizing Bayes risk under $Q_2$ measure yields $\hat{Y}_2 = (\frac{1}{2},\frac{1}{2})$. Apparently, neither estimator is proportional to the true expectation $\bbE_P[Y] = (2,\frac{1}{2})$, and hence both are not Fisher consistent. On the other hand, our method yields $\hat{y}_Q = (4/5,1/5)$, which is in the same direction as the expectation.  
Figure \ref{fig:counter_example} illustrates this counterexample.

\begin{figure}[h]
    \centering
    \includegraphics[width=0.8\linewidth]{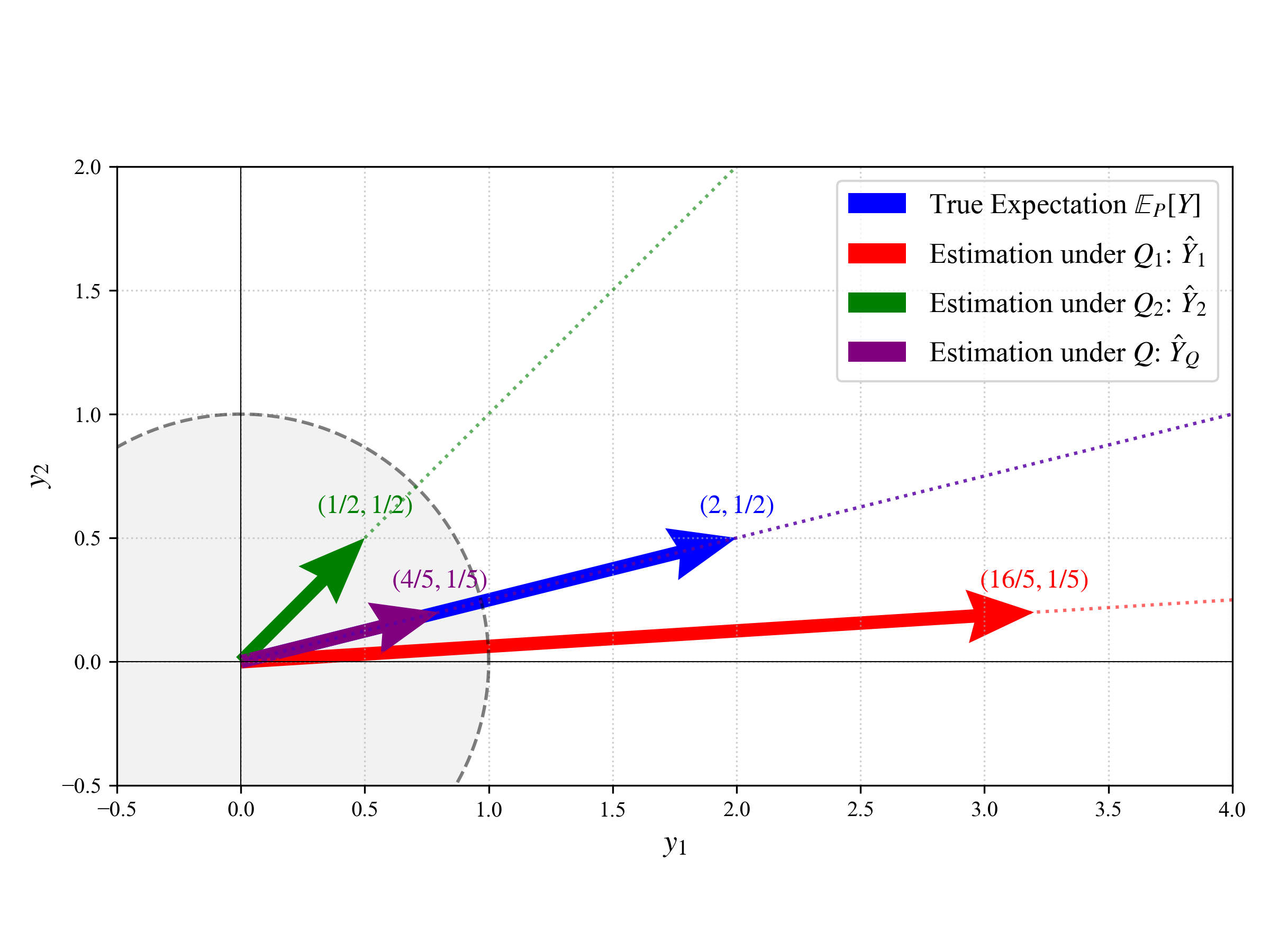}
    \caption{Visual illustration of the counterexample. The shaded area represents the feasible region $\mathcal{S} = \{w \in \mathbb{R}^2 : \|w\|_2 \le 1\}$. The blue arrow denotes the true expectation $\mathbb{E}_P[Y]$. The red and green arrows correspond to estimators $\hat{Y}_1$ and $\hat{Y}_2$ derived under measures $Q_1$ and $Q_2$, respectively, while the purple arrow represents the estimator $\hat{Y}_Q$ from our proposed method. Fisher consistency requires the estimator to be collinear with the true expectation. As shown, $\hat{Y}_1$ and $\hat{Y}_2$ deviate from the direction of $\mathbb{E}_P[Y]$, whereas our estimator $\hat{Y}_Q$ maintains the correct direction.}
    \label{fig:counter_example}
\end{figure}

\section{Omitted Proofs}

\subsection{Risk Equivalence under Measure Transformation}

\begin{proof}[Proof of Proposition \ref{prop:l2_risk_equivalence_under_change_of_measure}]
    Recall the definition of $\tilde{Q}$ and $T$ defined in (\ref{eq:reweight_measure})--(\ref{eq:proj_sphere}), we have $Q = \tilde{Q} \circ T^{-1}$. By the Change of Variables theorem for push-forward measures $Q$, for the loss $\ell_2(x,z) = \|f(x) - z\|^2$, we have:
\begin{equation*}
    \int_{\bbR^p \times \mathbb{S}^{d-1}} \ell_2(x,z) \, dQ(x,z)  = \int_{\bbR^p \times \bbR^d} (\ell_2 \circ T)(x,y) \, d \tilde{Q}(x,y).
\end{equation*}

Substituting the definition of $\tilde{Q}$ via the Radon-Nikodym derivative:
\begin{align*}
    \bbE_{(X,Z) \sim Q} \left[ \|f(X) - Z\|^2 \right] & = \int_{\bbR^p \times \bbS^{d-1}} \|f(x)-z \|^2 dQ(x,z)\\
    & = \int_{\bbR^p \times \bbR^{d}} \|f(x)- \frac{y}{\|y\|} \|^2 d\tilde{Q}(x,y)\quad \text{(change of measure formula)}\\
    &= \int_{\bbR^p \times \bbR^d} \left\|f(x) - \frac{y}{\|y\|}\right\|^2 \underbrace{\frac{\|y\|}{\bbE_{X,Y\sim P}[\|Y\|]} \, dP(x,y)}_{d \tilde{Q}(x,y)} \quad \text{(Radon-Nikodym derivative)}\\
    &= \frac{1}{\bbE_{(X,Y)\sim P}[\|Y\|]} \int_{\bbR^p \times \bbR^d} \|y\| \cdot \left\|f(x) - \frac{y}{\|y\|}\right\|^2 \, dP(x,y) \\
    &= \frac{1}{\bbE_{(X,Y)\sim P}[\|Y\|]} \bbE_{(X,Y) \sim P} \left[ \|Y\| \cdot \left\| f(X) - \frac{Y}{\|Y\|} \right\|^2 \right].
\end{align*}
Hence, the equivalence holds.
\end{proof}

\begin{proof}[Proof of Proposition \ref{prop:decision_risk_equivalence_under_change_of_measure}]
By the Change of Variables theorem for push-forward measures $Q$, for the loss $\ldec(x,z)$, we have:
\begin{equation*}
    \int_{\bbR^p \times \mathbb{S}^{d-1}} \ldec(x,z) \, dQ(x,z)  = \int_{\bbR^p \times \bbR^d} (\ldec \circ T)(x,y) \, d \tilde{Q}(x,y).
\end{equation*}

Substituting the definition of $\tilde{Q}$ via the Radon-Nikodym derivative:
\begin{align*}
    \bbE_{(X,Z) \sim Q} \left[ \ldec(f(X),Z) \right] & = \int_{\bbR^p \times \bbS^{d-1}} \ldec(f(x),z)  \, dQ(x,z)\\
    & = \int_{\bbR^p \times \bbR^{d}} \ldec(f(x),\frac{y}{\|y\|})  \,d\tilde{Q}(x,y)\\
    & = \int_{\bbR^p \times \bbR^{d}} \frac{1}{\|y\|}\ldec(f(x),y)  \,d\tilde{Q}(x,y) \quad \textnormal{(decision loss scaling property)}\\
    &= \int_{\bbR^p \times \bbR^d} \frac{1}{\|y\|}\ldec(f(x),y)  \,\underbrace{\frac{\|y\|}{\bbE_{(X,Y)\sim P}[\|Y\|]} \, dP(x,y)}_{d \tilde{Q}(x,y)} \\
    &= \frac{1}{\bbE_{(X,Y)\sim P}[\|Y\|]} \int_{\bbR^p \times \bbR^d} \ldec(f(x),y)\, dP(x,y)\\
    &= \frac{1}{\bbE_{(X,Y)\sim P}[\|Y\|]} \bbE_{(X,Y) \sim P}\left[\ldec(f(X),Y)\right].
\end{align*}
Hence the equivalence holds.
\end{proof}

\subsection{Analytical Properties and Fisher Consistency}

\begin{proof}[Proof of Proposition \ref{prop:lwise_properties}]

First, we prove the smoothness property. A function $g$ is $\beta$-smooth if its gradient is $\beta$-Lipschitz, which is implied if the spectral norm of its Hessian is bounded by $\beta$.
Recall the definition of $\lwise$, we have $$\lwise(\hat{y}, y) = \|y\| \left\| \hat{y} - \frac{y}{\|y\|} \right\|^2.$$

Taking the gradient of $\lwise$ with respect to $\hat{y}$:
\begin{equation*}
    \nabla_{\hat{y}} \lwise(\hat{y}, y) = 2\|y\| \hat{y} - 2y.
\end{equation*}
The Hessian with respect to $\hat{y}$ is:
\begin{equation*}
    \nabla^2_{\hat{y}} \lwise(\hat{y}, y) = 2\|y\| I_d,
\end{equation*}
where $I_d$ is the identity matrix in $\mathbb{R}^d$. The eigenvalues of the Hessian are uniformly equal to $2\|y\|$. Since $\|y\| \le B_y$ by assumption, the maximum eigenvalue is bounded by $2B_y$. Thus, $\lwise$ is $2B_y$-smooth.

Then, we prove the local Lipschitz property. A differentiable function is $L$-Lipschitz over a convex domain if the norm of its gradient is bounded by $L$ everywhere in that domain. Consider the norm of the gradient derived above:
\begin{align*}
    \| \nabla_{\hat{y}} \lwise(\hat{y}, y) \| &= \| 2\|y\| \hat{y} - 2y \| \\
    &\le 2 \| \|y\| \hat{y} \| + 2 \| y \| \quad \text{(Triangle Inequality)} \\
    &= 2\|y\| \|\hat{y}\| + 2\|y\| \\
    &= 2\|y\| (\|\hat{y}\| + 1).
\end{align*}
Using the boundedness assumptions $\|\hat{y}\| \le B$ and $\|y\| \le B_y$, we obtain:
\[
    \| \nabla_{\hat{y}} \lwise(\hat{y}, y) \| \le 2B_y (B + 1).
\]
Therefore, $\lwise$ is locally Lipschitz continuous with constant $L = 2B_y(B+1)$.

We further prove the boundedness property. We bound the loss function value itself. Using the definition of $\lwise$ and the Triangle Inequality on the norm term:
\begin{align*}
    \lwise(\hat{y}, y) &= \|y\| \left\| \hat{y} - \frac{y}{\|y\|} \right\|^2 \\
    &\le B_y \left( \|\hat{y}\| + \left\| \frac{y}{\|y\|} \right\| \right)^2 \\
    &= B_y (\|\hat{y}\| + 1)^2.
\end{align*}
Given $\|\hat{y}\| \le B$, we have:
\[
    \lwise(\hat{y}, y) \le B_y (B + 1)^2.
\]

This completes the proof.
\end{proof}

\begin{proof}[Proof of Proposition \ref{prop:wise_bayes_risk_minimizer}]
Let $f$ denote a measurable function from $\bbR^p \to \mathbb{R}^d$. Let $P$ denote the joint distribution of $(X, Y)$. By the Tower property of conditional expectation, we can express the risk in terms of the conditional expected loss:
$$
    \rwise_P(f) = \mathbb{E}_{X} \left[ \mathbb{E} [\lwise(f(X), Y) \mid x] \right].
$$
We analyze the inner conditional expectation. Recall that the wise loss is defined as $\lwise(f(X), Y) = \|Y\| \|f(X) - \frac{Y}{\|Y\|}\|^2$. Expanding this definition with respect to the conditional expectation:
$$
\begin{aligned}
    \mathbb{E} [\lwise(f(x), Y) \mid x] &= \mathbb{E} \left[ \|Y\| \left\|f(X) - \frac{Y}{\|Y\|} \right\|^2  \;\middle|\; x \right]\\
    &=\mathbb{E} \left[ \|Y\| \|f(x)\|^2 - 2Y^\top f(x) + \|Y\| \;\middle|\; x \right] \\
    &= \|f(x)\|^2 \mathbb{E}[\|Y\| \mid x] - 2f(x)^\top \mathbb{E}[Y \mid x] + \mathbb{E}[\|Y\| \mid x].
\end{aligned}
$$

We aim to complete the square with respect to the predictor $f(x)$, i.e., want to show $\mathbb{E} [\lwise(f(x), Y) \mid x]$ can be expressed as 
$$
    \mathbb{E} [\lwise(f(x), Y) \mid x] = \mathbb{E}[\|Y\| \mid x] \left\| f(x) - \frac{\mathbb{E}[Y \mid x]}{\mathbb{E}[\|Y\| \mid x]} \right\|^2 + C(x),
$$
where $C(x)$ is a constant term independent of $f(x)$.

To show this, we first identify the target $\tau(x) = \frac{\mathbb{E}[Y \mid x]}{\mathbb{E}[\|Y\| \mid x]}$. Provided that $\mathbb{E}[\|Y\| \mid x] > 0$, we can rewrite the variable terms:
$$
\begin{aligned}
    \mathbb{E}[\|Y\| \mid x] \left\| f(x) - \tau(x) \right\|^2 &= \mathbb{E}[\|Y\| \mid x] \left( \|f(x)\|^2 - 2 f(x)^\top \tau(x) + \|\tau(x)\|^2 \right) \\
    &= \|f(x)\|^2 \mathbb{E}[\|Y\| \mid x] - 2f(x)^\top \mathbb{E}[Y \mid x] + \|\tau(x)\|^2 \mathbb{E}[\|Y\| \mid x]\,
\end{aligned}
$$
which matched our target with $C(x) = \mathbb{E}[\|Y\| \mid x] (1- \|\tau(x)\|^2) $.

The total risk is then:
$$
    \rwise_P(f) = \mathbb{E}_{X} \left[ \mathbb{E}[\|Y\| \mid x] \left\| f(x) - \frac{\mathbb{E}[Y \mid x]}{\mathbb{E}[\|Y\| \mid x]} \right\|^2 + C(x) \right].
$$
Since $\mathbb{E}[\|Y\| \mid x] > 0$ almost everywhere with respect to $P$ and the squared norm is non-negative, the expectation is minimized if and only if the term inside the norm is zero almost everywhere. Therefore, the Bayes risk minimizer $f^*$ is unique almost everywhere and satisfies:
$$
    f^*(x) = \frac{\mathbb{E}[Y \mid x]}{\mathbb{E}[\|Y\| \mid x]} \quad \text{a.s.}
$$
\end{proof}

\begin{proof}[Proof of Theorem \ref{thm:fisher_consistency}]

From Proposition \ref{prop:wise_bayes_risk_minimizer}, we know that the Bayes risk minimizer of $\lwise$ satisfies $f(x) = \frac{\mathbb{E}[y \mid x]}{\mathbb{E}[\|y\| \mid x]}$.

This minimizer is a strictly positive scalar multiple of the conditional expectation $\mathbb{E}[y \mid x]$. By the Scale Invariance property of the optimization oracle $w^*$, scaling the input vector by a positive constant does not change the optimal decision:
$$
    w^*(f(x)) = w^*\left( \frac{\mathbb{E}[y \mid x]}{\mathbb{E}[\|y\| \mid x]} \right) = w^*(\mathbb{E}[y \mid x]).
$$
Since $w^*(\mathbb{E}[y \mid x])$ minimizes the expected cost $\mathbb{E}[y \mid x]^\top w$, $f$ minimizes the decision regret.
\end{proof}

\subsection{Calibration Risk Bound}

To analyze the calibration of $\lwise$, we first prove a supplement lemma.

\begin{lemma}[Equivalent Form of $\lwise$ Excess Regret]
\label{lem:ewise_equivalent_form}
    The excess regret of $\lwise$ admits the following equivalent form:
    $$
    \rwise_P(f) - \rwise_P(f^*) = \mathbb{E}_P\left[\mathbb{E}[\|Y\| \mid X]\cdot \|f(X) - f^*(X) \|^2  \right],
    $$
    where $f^*(x) = \frac{\mathbb{E}_P[Y \mid x]}{\mathbb{E}_P[\|Y\| \mid x]}$ is the Bayes risk minimizer of $\lwise$.
\end{lemma}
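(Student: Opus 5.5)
The plan is to reuse the completing-the-square decomposition from the proof of Proposition \ref{prop:wise_bayes_risk_minimizer}. There we showed that, for every measurable $f$ and almost every $x$,
\[
\mathbb{E}[\lwise(f(x),Y)\mid x] = \mathbb{E}[\|Y\|\mid x]\,\|f(x)-f^*(x)\|^2 + C(x),
\]
with $C(x)=\mathbb{E}[\|Y\|\mid x](1-\|f^*(x)\|^2)$. The key point is that $C(x)$ does not depend on $f$. Under Assumption \ref{assum:boundedness}, $0<\|Y\|\le B_y$ almost surely, so $\mathbb{E}[\|Y\|\mid x]\in(0,B_y]$ and $f^*$ is well defined.

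First I will apply the identity to $f$ and then to $f^*$ itself. For $f^*$ the quadratic term vanishes, which gives
\[
\mathbb{E}[\lwise(f^*(x),Y)\mid x]=C(x).
\]
Subtracting the two conditional identities pointwise yields
\[
\mathbb{E}[\lwise(f(x),Y)\mid x]-\mathbb{E}[\lwise(f^*(x),Y)\mid x] = \mathbb{E}[\|Y\|\mid x]\,\|f(x)-f^*(x)\|^2 .
\]

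Next I will take expectations over $X$ and use the tower property, $\rwise_P(g)=\mathbb{E}_X\bigl[\mathbb{E}[\lwise(g(X),Y)\mid X]\bigr]$ for $g=f,f^*$. This gives the claimed formula.

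The only point requiring care is justifying the subtraction of expectations, that is, showing both risks are finite so no $\infty-\infty$ arises. Boundedness of $\lwise$ handles this: by Proposition \ref{prop:lwise_properties} it is at most $B_y(B+1)^2$ for $f\in\mathcal{F}$, and for $f^*$ it is at most $4B_y$ since $\|f^*\|\le1$ by Jensen. For a general measurable $f$, both sides can be interpreted in $[0,\infty]$, and the identity still holds because $C(x)\in[0,B_y]$ is integrable. I expect no real obstacle beyond this integrability bookkeeping and the measurability of $x\mapsto\mathbb{E}[\|Y\|\mid x]$, which is standard.
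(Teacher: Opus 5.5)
Your proposal is correct and follows essentially the same route as the paper. Both arguments condition on $X$, expand $\|Y\|\,\|f(x)-Y/\|Y\|\|^2=\|Y\|\,\|f(x)\|^2-2Y^\top f(x)+\|Y\|$, and complete the square around $f^*(x)$; the paper subtracts the two risks first and then completes the square, whereas you reuse the decomposition from the proof of Proposition~\ref{prop:wise_bayes_risk_minimizer} and subtract afterwards, which is a bit more economical and adds integrability bookkeeping (finiteness of the risk of $f^*$ via $\|f^*\|\le 1$) that the paper leaves implicit.
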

\begin{proof}[Proof of Lemma \ref{lem:ewise_equivalent_form}]
    We begin by expanding the definition of the excess regret. Using the definition of $\rwise_P$, we have:
    $$
    \rwise_P(f) - \rwise_P(f^*) = \mathbb{E}_P\left[\|Y\| \left\|f(X) - \frac{Y}{\|Y\|}\right\|^2 \right] - \mathbb{E}_P\left[\|Y\| \left\|f^*(X) - \frac{Y}{\|Y\|}\right\|^2 \right].
    $$
    
    Next, we expand the squared norms. Note that the terms involving only $Y$ (specifically $\|Y\| \cdot \|Y/\|Y\|\|^2$) cancel out, and we use the linearity of expectation to group the remaining terms:
    $$
    \rwise_P(f) - \rwise_P(f^*) = \mathbb{E}_P\left[\|Y\|(\|f(X)\|^2 - \|f^*(X)\|^2) - 2Y^\top (f(X) - f^*(X)) \right].
    $$
    
    By the tower property of conditional expectation, we get:
    $$
    = \mathbb{E}_P\left[ \mathbb{E}_P[\|Y\| \mid X] ( \|f(X)\|^2 - \|f^*(X)\|^2 ) - 2\mathbb{E}_P[Y \mid X]^\top (f(X) - f^*(X))  \right].
    $$
    
    Substituting the explicit expression for the Bayes risk minimizer, $f^*(X) = \frac{\mathbb{E}_P[Y \mid X]}{\mathbb{E}_P[\|Y\| \mid X]}$, into the equation yields:
    $$
    = \mathbb{E}_P\left[ \mathbb{E}_P[\|Y\| \mid X] \left( \|f(X)\|^2 - \left\|\frac{\mathbb{E}_P[Y \mid X]}{\mathbb{E}_P[\|Y\| \mid X]}\right\|^2 \right) - 2\mathbb{E}_P[Y \mid X]^\top \left(f(X) - \frac{\mathbb{E}_P[Y \mid X]}{\mathbb{E}_P[\|Y\| \mid X]}\right)  \right].
    $$
    
    Expanding the inner product and simplifying the scalar terms allows us to regroup the expression. We distribute the $\mathbb{E}_P[\|Y\| \mid X]$ term and rearrange to prepare for completing the square:
    $$
    \begin{aligned}
    &= \mathbb{E}_P \left[\mathbb{E}[\|Y\| \mid X] \|f(X)\|^2  - 2\mathbb{E}_P[Y \mid X]^\top f(X) + 2\frac{\| \mathbb{E}_P[Y \mid X] \|^2 }{\mathbb{E}_P[\|Y\| \mid X]} - \mathbb{E}_P[\|Y\| \mid X] \left\|\frac{\mathbb{E}_P[Y \mid X]}{\mathbb{E}_P[\|Y\| \mid X]}\right\|^2 \right] \\
    &= \mathbb{E}_P \left[\mathbb{E}_P[\|Y\| \mid X] \|f(X)\|^2  - 2\mathbb{E}_P[\|Y\| \mid X] \frac{\mathbb{E}_P[Y \mid X]^\top}{\mathbb{E}_P[\|Y\| \mid X]} f(X) + \mathbb{E}_P[\|Y\| \mid X]  \left\|\frac{\mathbb{E}_P[Y \mid X]}{\mathbb{E}_P[\|Y\| \mid X]}\right\|^2\right].
    \end{aligned}
    $$
    
    The term inside the expectation is now:
    $$
    = \mathbb{E}_P \left[\mathbb{E}_P[\|Y\| \mid X] \left\|f(X) - \frac{\mathbb{E}_P[Y \mid X]}{\mathbb{E}_P[\|Y\| \mid X]}\right\|^2 \right].
    $$
    
    Finally, reintroducing the definition of $f^*(X)$ concludes the proof:
    $$
    = \mathbb{E}_P \left[\mathbb{E}_P [\|Y\| \mid X] \left\|f(X) - f^*(X)\right\|^2 \right].
    $$
\end{proof}

Equipped with the above lemma, we are ready to analyze the calibration bound.

\begin{proof}[Proof of Proposition \ref{prop:calibration}]
    By the previous definition, let $f^*(x) = \frac{\mathbb{E}_P[Y \mid X=x]}{\mathbb{E}_P[\|Y\| \mid X=x]}$ denote the Bayes risk minimizer of $\lwise$. We begin by expressing the excess decision risk using the tower property of conditional expectation:
    $$
    \begin{aligned}
    \rdec_P(f) - R_P^* &= \mathbb{E}_P\left[ Y^\top w^*(f(X)) - Y^\top w^*(\bbE_P[Y|X])  \right]\\
    & = \mathbb{E}_P\left[ \mathbb{E}_P[Y \mid X]^\top (w^*(f(X)) -w^*(\bbE_P[Y|X] ))\right].
    \end{aligned}
    $$

    Next, we introduce the normalizing term $\mathbb{E}_P[\|Y\| \mid X]$ to substitute the Bayes risk minimizer $f^*(X)$. Note that $w^*(\bbE_P[Y|x])=w^*(f^*(x))$ holds since $\bbE_P[Y|x]$ and $f^*$ only differs in a constant factor:
    $$
    \begin{aligned}
    & = \mathbb{E}_P\left[ \mathbb{E}_P[\|Y\| \mid X]\frac{\mathbb{E}_P[Y \mid X]^\top}{\mathbb{E}_P[\|Y\| \mid X]} (w^*(f(X)) - w^*(\bbE_P[Y|X]))\right]\\
    & = \mathbb{E}_P\left[ \mathbb{E}_P[\|Y\| \mid X]f^*(X)^\top (w^*(f(X)) - w^*(f^*(X)))\right].
    \end{aligned}
    $$
    
    We now add and subtract the term $f(X)^\top w^*(f(X))$ inside the expectation to utilize the optimality properties of the optimization oracle $w^*(\cdot)$. Specifically, we use the property that $w^*(f(X))$ optimizes the inner product with $f(X)$, which implies $f(X)^\top w^*(f(X)) \le f(X)^\top w^*(f^*(X))$:
    $$
    \begin{aligned}
    & = \mathbb{E}_P\left[ \mathbb{E}_P[\|Y\| \mid X] (f^*(X)^\top w^*(f(X)) - f(X)^\top w^*(f(X)) +  f(X)^\top w^*(f(X)) - f^*(X)^\top w^*(f^*(X)) )\right]\\
    & \leq \mathbb{E}_P\left[ \mathbb{E}_P[\|Y\| \mid X] (f^*(X)^\top w^*(f(X)) - f(X)^\top w^*(f(X)) +  f(X)^\top w^*(f^*(X)) - f^*(X)^\top w^*(f^*(X)) )\right].
    \end{aligned}
    $$

    Regrouping the terms by the decision vectors $w^*(f(X))$ and $w^*(f^*(X))$ reveals the structure for applying the Cauchy-Schwarz inequality:
    $$
    \begin{aligned}
    & = \mathbb{E}_P\left[ \mathbb{E}_P[\|Y\| \mid X] (  w^*(f(X))^\top (f^*(X) -f(X)) + w^*(f^*(X))^\top(f(X)-f^*(X)) )\right]\\
    & \leq \mathbb{E}_P\left[ \mathbb{E}_P[\|Y\| \mid X] (  \| w^*(f(X)) \| \| f^*(X) -f(X)\|  + \| w^*(f^*(X))\| \|f(X)-f^*(X)\| )\right].
    \end{aligned}
    $$

    Using the boundedness of the feasible region $\mathcal{S}$, where $\|w\| \le B_{\mathcal{S}}$ for all $w \in \mathcal{S}$, we simplify the expression:
    $$
    \leq 2 B_{\mathcal{S}} \mathbb{E}_P\left[ \mathbb{E}_P[\|Y\| \mid X] \| f^*(X) -f(X)\|  \right].
    $$

    Finally, we apply Jensen's inequality ($\mathbb{E}[Z] \le \sqrt{\mathbb{E}[Z^2]}$) and the boundedness of the label norms ($ \mathbb{E}_P[\|Y\| \mid X] \le B_y$) to relate this bound back to the pairwise excess regret established in Lemma \ref{lem:ewise_equivalent_form}:
    $$
    \begin{aligned}
    & \leq 2 B_{\mathcal{S}} \sqrt{\mathbb{E}_P\left[ \mathbb{E}_P[\|Y\| \mid X]^2 \| f^*(X) -f(X)\|^2  \right]}\\
    & \leq 2 B_{\mathcal{S}} B_y^{\frac{1}{2}} \sqrt{\mathbb{E}_P\left[ \mathbb{E}_P[\|Y\| \mid X] \| f^*(X) -f(X)\|^2  \right]}\\
    & = 2 B_{\mathcal{S}} B_y^{\frac{1}{2}} \sqrt{\rwise_P(f) - \rwise_P(f^*)}.
    \end{aligned}
    $$
\end{proof}

\subsection{Excess Regret Bound}
\label{sec:wise_fast_rates_optionC}

In this section, we provide the proof for Theorem \ref{thm:excess_regret}. The high level idea is to prove the strong central condition (Def. \ref{def:central_condition}) via the Bernstein type condition under the squared loss. This Bernstein type bound is further ensured by the star-shaped condition of the hypothesis class.

Assume $\|Y\|>0$ almost surely. Define the deterministic maps
\begin{equation*}
s(y) := \sqrt{\|y\|}, \qquad v(y) := \frac{y}{\sqrt{\|y\|}}.
\end{equation*}
For any $f:\bbR^p\to\bbR^d$, define the associated function
\begin{equation}
\label{eq:g_from_f}
g_f:\bbR^{p}\times \bbR^d \to \bbR^d, \qquad g_f(x,y) := s(y)\, f(x) = \sqrt{\|y\|}\, f(x).
\end{equation}
Let the induced hypothesis class be
\begin{equation*}
\calG := \{ g_f : f\in\calF\}.
\end{equation*}
Define the squared loss on $(x,y)\in\bbR^p\times\bbR^d$ by
\begin{equation*}
\label{eq:sq_loss_def}
\ell_{\mathrm{sq}}(g; x,y) := \big\| g(x,y) - v(y) \big\|^2
= \left\| g(x,y) - \frac{y}{\sqrt{\|y\|}} \right\|^2.
\end{equation*}

\begin{proposition}[WISE is a squared loss under $P$]
\label{prop:wise_is_squared_under_P}
For any measurable $f:\bbR^p\to\bbR^d$ and the induced $g_f$ in \eqref{eq:g_from_f},the following hold
\begin{align}
\label{eq:wise_sq_pointwise}
\ell_{\text{WISE}}(f(x),y)
&= \ell_{\mathrm{sq}}(g_f; x,y), \qquad \forall (x,y): \|y\|>0,\\
\label{eq:wise_sq_population}
\rwise_P(f)
&:= \bbE_{(X,Y)\sim P}\!\left[\ell_{\text{WISE}}(f(X),Y)\right]
= \bbE_{(X,Y)\sim P}\!\left[\ell_{\mathrm{sq}}(g_f;X,Y)\right]
=: R^{\mathrm{sq}}_P(g_f),\\
\label{eq:wise_sq_empirical}
\frac1n\sum_{i=1}^n \ell_{\text{WISE}}(f(X_i),Y_i)
&= \frac1n\sum_{i=1}^n \ell_{\mathrm{sq}}(g_f;X_i,Y_i).
\end{align}
\end{proposition}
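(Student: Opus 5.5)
The plan is to prove the pointwise identity \eqref{eq:wise_sq_pointwise} by direct algebra and then deduce the other two identities from it. The whole argument rests on the fact that $\|y\|>0$ allows us to factor out $\sqrt{\|y\|}$.

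Fix $(x,y)$ with $\|y\|>0$, so that $s(y)=\sqrt{\|y\|}>0$ and $v(y)=y/\sqrt{\|y\|}$ are well defined. Write $\|y\| = s(y)^2$ and pull the positive scalar inside the squared norm:
\[
\lwise(f(x),y)=\|y\|\left\|f(x)-\frac{y}{\|y\|}\right\|^2=\left\|\sqrt{\|y\|}\,f(x)-\frac{\sqrt{\|y\|}\,y}{\|y\|}\right\|^2 .
\]
This uses only the homogeneity $\|cu\|^2=c^2\|u\|^2$ with $c=\sqrt{\|y\|}$. Since $\sqrt{\|y\|}\,y/\|y\| = y/\sqrt{\|y\|}=v(y)$, the right-hand side equals $\|g_f(x,y)-v(y)\|^2=\ell_{\mathrm{sq}}(g_f;x,y)$, which establishes \eqref{eq:wise_sq_pointwise}.

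For \eqref{eq:wise_sq_population}, recall that $\|Y\|>0$ holds $P$-almost surely, by the standing assumption and Assumption~\ref{assum:boundedness}. The two integrands therefore agree $P$-a.s., and their expectations coincide. Both integrands are nonnegative and measurable, since $s$ and $v$ are continuous on $\{y:\|y\|>0\}$ and $f$ is measurable, so the expectations are well defined in $[0,\infty]$. For \eqref{eq:wise_sq_empirical}, the samples $(X_i,Y_i)$ satisfy $\|Y_i\|>0$ almost surely, so the identity holds termwise and hence for the average.

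No step is a genuine obstacle. The only points needing care are the null set $\{\|y\|=0\}$, which is handled by the almost-sure positivity, and measurability of $g_f$ and $v$, which follows from continuity away from the origin.
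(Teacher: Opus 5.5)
Your proposal is correct and matches the paper's proof: factor $\sqrt{\|y\|}$ into the squared norm to obtain the pointwise identity, then take expectations and average over the samples. Your remarks on the null set $\{\|y\|=0\}$ and on measurability are sound refinements that the paper leaves implicit.
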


\begin{proof}
Fix $(x,y)$ with $\|y\|>0$. Then
\begin{align*}
\ell_{\mathrm{sq}}(g_f; x,y)
&= \left\| \sqrt{\|y\|}f(x) - \frac{y}{\sqrt{\|y\|}} \right\|^2
= \left\| \sqrt{\|y\|}\left(f(x) - \frac{y}{\|y\|}\right) \right\|^2 \\
&= \|y\| \left\| f(x) - \frac{y}{\|y\|}\right\|^2
= \ell_{\text{WISE}}(f(x),y),
\end{align*}
which proves \eqref{eq:wise_sq_pointwise}. Taking expectation gives \eqref{eq:wise_sq_population},
and summing over samples gives \eqref{eq:wise_sq_empirical}.
\end{proof}

We define the (strong) $\eta$-central condition (a.k.a.\ exponential stochastic mixability) as follows.
\begin{definition}[Strong $\eta$-central condition]
\label{def:central_condition}
We say the learning problem $(P,\ell,\calH)$ satisfies the strong $\eta$-central condition
(with comparator $h^*\in\calH$) if there exists $\eta>0$ and $h^*\in\arg\min_{h\in\calH}\bbE[\ell(h;Z)]$
such that for all $h\in\calH$,
\begin{equation*}
\label{eq:central_condition}
\bbE\!\left[\exp\!\left(-\eta\big(\ell(h;Z)-\ell(h^*;Z)\big)\right)\right]\le 1.
\end{equation*}
\end{definition}

For bounded losses, the (strong) central condition is essentially equivalent to (generalized) Bernstein
conditions; see Theorem~5.4 in \cite{vanervenFastRates2015}.

\begin{definition}[$(1,B_0)$-Bernstein condition]
\label{def:bernstein_1}
We say $(P,\ell,\calH)$ satisfies the $(1,B_0)$-Bernstein condition (with comparator $h^*$) if
there exists $B_0>0$ and $h^*\in\arg\min_{h\in\calH}\bbE[\ell(h;Z)]$ such that for all $h\in\calH$,
\begin{equation}
\label{eq:bernstein_1}
\bbE\!\left[\big(\ell(h;Z)-\ell(h^*;Z)\big)^2\right]
\ \le\
B_0 \cdot \bbE\!\left[\ell(h;Z)-\ell(h^*;Z)\right].
\end{equation}
\end{definition}

The next lemma shows that for squared loss, star-shapedness at the risk minimizer
(and boundedness) yields a Bernstein condition automatically.

\begin{lemma}[Bernstein for bounded squared loss under star-shapedness]
\label{lem:bernstein_squared_starshaped}
Let $Z\sim P$, let $v:\mathcal Z\to\bbR^d$ be measurable, and define
$\ell_{\mathrm{sq}}(g;Z):=\|g(Z)-v(Z)\|^2$ for measurable $g:\mathcal Z\to\bbR^d$.
Assume:
\begin{enumerate}
\item[(a)] (\textbf{Star-shapedness at $g^*$}) $\calG$ is star-shaped at
$g^*\in\arg\min_{g\in\calG}\bbE[\ell_{\mathrm{sq}}(g;Z)]$, i.e.\ $(1-\lambda)g^*+\lambda g\in\calG$
for all $g\in\calG$ and $\lambda\in[0,1]$;
\item[(b)] (\textbf{Bounded range}) there exist $G_{\max},V_{\max}>0$ such that
$\|g(Z)\|\le G_{\max}$ for all $g\in\calG$ a.s.\ and $\|v(Z)\|\le V_{\max}$ a.s.
\end{enumerate}
Then $(P,\ell_{\mathrm{sq}},\calG)$ satisfies the $(1,B_0)$-Bernstein condition with comparator $g^*$
and constant
\begin{equation}
\label{eq:B0_general}
B_0 = 4\,(G_{\max}+V_{\max})^2.
\end{equation}
\end{lemma}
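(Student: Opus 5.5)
Fix $g\in\calG$ and write $\Delta := g(Z)-g^*(Z)$, $e^* := g^*(Z)-v(Z)$. The plan is to bound the second moment of the excess loss pointwise by $\|\Delta\|^2$, and to bound the first moment from below by $\|\Delta\|^2$ using first-order optimality of $g^*$ along the segment guaranteed by star-shapedness.

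\textbf{Pointwise identity.} Expanding the squares gives
\[
\ell_{\mathrm{sq}}(g;Z)-\ell_{\mathrm{sq}}(g^*;Z) = \|\Delta+e^*\|^2-\|e^*\|^2 = \|\Delta\|^2 + 2\Delta^\top e^*.
\]
Equivalently, this difference equals $(g-g^*)^\top(g+g^*-2v)$.

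\textbf{Second moment.} By Cauchy--Schwarz applied to the second form,
\[
\big|\ell_{\mathrm{sq}}(g;Z)-\ell_{\mathrm{sq}}(g^*;Z)\big| \le \|\Delta\|\,\|g+g^*-2v\| \le 2(G_{\max}+V_{\max})\|\Delta\|.
\]
Squaring yields
\[
\bbE\big[(\ell_{\mathrm{sq}}(g)-\ell_{\mathrm{sq}}(g^*))^2\big]\le 4(G_{\max}+V_{\max})^2\,\bbE\|\Delta\|^2 .
\]

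\textbf{First moment via star-shapedness.} This is the key step; without it, the cross term $2\bbE[\Delta^\top e^*]$ could be negative. For $\lambda\in[0,1]$, set $g_\lambda=g^*+\lambda(g-g^*)$, which lies in $\calG$ by assumption (a). Then
\[
\phi(\lambda):=\bbE[\ell_{\mathrm{sq}}(g_\lambda)]-\bbE[\ell_{\mathrm{sq}}(g^*)] = \lambda^2\bbE\|\Delta\|^2+2\lambda\,\bbE[\Delta^\top e^*] .
\]
Because $g^*$ minimizes the risk over $\calG$, we have $\phi(\lambda)\ge 0$ for every $\lambda\in(0,1]$. Dividing by $\lambda$ and letting $\lambda\downarrow 0$ gives $\bbE[\Delta^\top e^*]\ge 0$. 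All expectations are finite by assumption (b). Therefore
\[
\bbE[\ell_{\mathrm{sq}}(g)-\ell_{\mathrm{sq}}(g^*)] = \bbE\|\Delta\|^2+2\bbE[\Delta^\top e^*]\ge \bbE\|\Delta\|^2 .
\]

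\textbf{Conclusion.} Combining the two bounds,
\[
\bbE\big[(\ell_{\mathrm{sq}}(g)-\ell_{\mathrm{sq}}(g^*))^2\big]\le 4(G_{\max}+V_{\max})^2\,\bbE[\ell_{\mathrm{sq}}(g)-\ell_{\mathrm{sq}}(g^*)],
\]
which is the $(1,B_0)$-Bernstein condition with $B_0=4(G_{\max}+V_{\max})^2$.
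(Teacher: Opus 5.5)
Your proof is correct and matches the paper's argument step for step: first-order optimality along the star-shaped segment $g^*+\lambda(g-g^*)$ gives $\bbE[\ell_{\mathrm{sq}}(g;Z)-\ell_{\mathrm{sq}}(g^*;Z)]\ge\bbE\|g-g^*\|^2$, and the factorization $(g-g^*)^\top(g+g^*-2v)$ with Cauchy--Schwarz gives the pointwise bound $2(G_{\max}+V_{\max})\|g-g^*\|$. Squaring that bound and combining it with the first-moment inequality yields exactly $B_0=4(G_{\max}+V_{\max})^2$.
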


\begin{proof}
Fix $g\in\calG$ and define $g_\lambda := (1-\lambda)g^*+\lambda g$ for $\lambda\in[0,1]$.
By star-shapedness, $g_\lambda\in\calG$. Since $g^*$ minimizes the squared-risk
$R(g):=\bbE\|g(Z)-v(Z)\|^2$, we have $R(g_\lambda)-R(g^*)\ge 0$ for all $\lambda\in[0,1]$.
Expanding,
\[
R(g_\lambda)-R(g^*)=
\lambda^2\,\bbE\|g-g^*\|^2
+2\lambda\,\bbE\langle g-g^*,\, g^*-v\rangle.
\]
Divide by $\lambda>0$ and let $\lambda\downarrow 0$ to obtain
$\bbE\langle g-g^*,\, g^*-v\rangle\ge 0$. Hence,
\[
\bbE\!\left[\ell_{\mathrm{sq}}(g;Z)-\ell_{\mathrm{sq}}(g^*;Z)\right]
= R(g)-R(g^*)
= \bbE\|g-g^*\|^2 + 2\bbE\langle g-g^*,\, g^*-v\rangle
\ \ge\ \bbE\|g-g^*\|^2.
\]
Now, pointwise,
\begin{align*}
\big|\ell_{\mathrm{sq}}(g;Z)-\ell_{\mathrm{sq}}(g^*;Z)\big|
&=\big|\|g-v\|^2-\|g^*-v\|^2\big| \\
&=\big|\langle g-g^*,\, g+g^*-2v\rangle\big|
\le \|g-g^*\|\,(\|g\|+\|g^*\|+2\|v\|).
\end{align*}
Using $\|g\|,\|g^*\|\le G_{\max}$ and $\|v\|\le V_{\max}$ a.s., we get
\[
\big|\ell_{\mathrm{sq}}(g;Z)-\ell_{\mathrm{sq}}(g^*;Z)\big|
\le 2(G_{\max}+V_{\max})\,\|g-g^*\|.
\]
Squaring and taking expectation yields
\[
\bbE\!\left[\big(\ell_{\mathrm{sq}}(g;Z)-\ell_{\mathrm{sq}}(g^*;Z)\big)^2\right]
\le 4(G_{\max}+V_{\max})^2 \,\bbE\|g-g^*\|^2
\le 4(G_{\max}+V_{\max})^2 \,\bbE\!\left[\ell_{\mathrm{sq}}(g;Z)-\ell_{\mathrm{sq}}(g^*;Z)\right],
\]
which is \eqref{eq:bernstein_1} with $B_0$ given by \eqref{eq:B0_general}.
\end{proof}

Under Assumption~\ref{assum:boundedness}, we have
\[
\|v(Y)\|=\sqrt{\|Y\|}\le \sqrt{B_y},
\qquad
\|g_f(X,Y)\|=\sqrt{\|Y\|}\,\|f(X)\|\le \sqrt{B_y}\,B,
\]
so in Lemma~\ref{lem:bernstein_squared_starshaped} we may take
\begin{equation}
\label{eq:GVmax_wise}
V_{\max}=\sqrt{B_y},\qquad G_{\max}=\sqrt{B_y}\,B,
\qquad\Rightarrow\qquad
B_0 = 4B_y(B+1)^2.
\end{equation}
Moreover, Assumption~\ref{assum:boundedness} implies the squared loss is bounded:
\begin{equation}
\label{eq:V_bound_wise}
0\le \ell_{\mathrm{sq}}(g;X,Y)=\|g(X,Y)-v(Y)\|^2 \le (G_{\max}+V_{\max})^2 = B_y(B+1)^2 \quad\text{a.s.}
\end{equation}
Finally, if $\calF$ is star-shaped at $f^*$, then $\calG$ is star-shaped at $g^*$ since
$g_{(1-\lambda)f+\lambda f^*}=(1-\lambda)g_f+\lambda g_{f^*}$.

\begin{lemma}[VC-linear-subgraph dimension transfer]
\label{lem:VC_transfer}
Let $\mathrm{VC}\text{-}\mathrm{LS}(\cdot)$ denote the VC-linear-subgraph dimension.
Then
\[
\mathrm{VC}\text{-}\mathrm{LS}(\calG)\ \le\ \mathrm{VC}\text{-}\mathrm{LS}(\calF).
\]
In particular, if $\mathrm{VC}\text{-}\mathrm{LS}(\calF)\le \nu$, then $\mathrm{VC}\text{-}\mathrm{LS}(\calG)\le \nu$.
\end{lemma}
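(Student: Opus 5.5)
The plan is a direct shattering argument. Suppose $\calG$ shatters a set of $\nu'$ points in its own input space, which is $\bbR^{p+d}$ (inputs are $(x,y)$), directions $\beta_i \in \bbR^d$ and thresholds $t_i$. Then I would exhibit $\nu'$ points that $\calF$ shatters, so that $\nu' \le \mathrm{VC}\text{-}\mathrm{LS}(\calF)$. Concretely, let the shattered configuration be $((x_i,y_i),\beta_i,t_i)_{i=1}^{\nu'}$. Shattering means that every sign pattern in $\{0,1\}^{\nu'}$ is realized by some $g_f\in\calG$ through the indicators $\mathbbm{1}[\beta_i^\top g_f(x_i,y_i)\le t_i]$.

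The key observation is the identity
\[
\beta_i^\top g_f(x_i,y_i) = \sqrt{\|y_i\|}\,\beta_i^\top f(x_i) = \tilde\beta_i^\top f(x_i), \qquad \tilde\beta_i := \sqrt{\|y_i\|}\,\beta_i .
\]
This holds because $s(y_i)=\sqrt{\|y_i\|}$ is a fixed scalar once the point is fixed. Hence
\[
\mathbbm{1}[\beta_i^\top g_f(x_i,y_i)\le t_i] = \mathbbm{1}[\tilde\beta_i^\top f(x_i)\le t_i] \quad \text{for every } f\in\calF .
\]
It follows that the configuration $(x_i,\tilde\beta_i,t_i)_{i=1}^{\nu'}$ in $\bbR^{p+d+1}$ is shattered by $\calF^\circ$, since the same $f$ realizes each pattern. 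No rescaling of $t_i$ is needed. If some $\|y_i\|=0$ were allowed, $\tilde\beta_i=0$ would still give a legitimate triple, so the argument needs no case split. In any case $\|Y\|>0$ is assumed throughout.

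The one point needing care, and the only mild obstacle, is that the definition of VC-linear-subgraph dimension for $\calF$ allows repeated $x_i$. The $(x_i,y_i)$ may be distinct while some $x_i$ coincide. This is harmless, because the definition only requires the existence of tuples $(x_i,\beta_i,t_i)$, which need not be distinct, and because the shattered labels are indexed by $i$, not by distinct points. So the supremum over shattered sizes for $\calG$ is at most that for $\calF$, giving $\mathrm{VC}\text{-}\mathrm{LS}(\calG)\le\mathrm{VC}\text{-}\mathrm{LS}(\calF)\le\nu$.
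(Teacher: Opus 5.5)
Your proof is correct and follows the same reduction as the paper. Each $\calG$-configuration $((x_i,y_i),\beta_i,t_i)$ is mapped to an $\calF$-configuration that induces identical indicator patterns for every $f$; the only cosmetic difference is that you absorb $\sqrt{\|y_i\|}$ into the direction ($\tilde\beta_i=\sqrt{\|y_i\|}\,\beta_i$), whereas the paper divides the threshold ($t_i/\sqrt{\|y_i\|}$), and your version has the minor advantage of not requiring $\|y_i\|>0$.
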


\begin{proof}
Consider the VC-linear-subgraph sets for $\calG$:
\[
\calG^\circ := \big\{\,(u,\beta,t): \beta^\top g(u)\le t\,\big\}_{g\in\calG},\qquad u=(x,y),\ \beta\in\bbR^d,\ t\in\bbR.
\]
For any $g=g_f\in\calG$ and any $u=(x,y)$ with $\|y\|>0$,
\[
\beta^\top g_f(u)\le t
\quad\Longleftrightarrow\quad
\beta^\top\big(\sqrt{\|y\|}f(x)\big)\le t
\quad\Longleftrightarrow\quad
\beta^\top f(x)\le \frac{t}{\sqrt{\|y\|}}.
\]
Thus, on any fixed collection of points $u_i=(x_i,y_i)$, the dichotomies induced by $\calG^\circ$ correspond to dichotomies
induced by $\calF^\circ$ evaluated at $(x_i,\beta_i,t_i/\sqrt{\|y_i\|})$. Therefore $\calG^\circ$ cannot shatter more points than $\calF^\circ$,
and $\mathrm{VC}\text{-}\mathrm{LS}(\calG)\le \mathrm{VC}\text{-}\mathrm{LS}(\calF)$.
\end{proof}

\begin{proof}[Proof of Theorem \ref{thm:excess_regret}]

By Proposition~\ref{prop:wise_is_squared_under_P},
for any $f\in\calF$ with induced $g_f\in\calG$,
\[
\rwise_P(f)=R^{\mathrm{sq}}_P(g_f).
\]
In particular, defining $\hat g_n:=g_{\hat f_n}$ and $g^*:=g_{f^*}$, we have the \emph{exact} excess-risk identity
\begin{equation}
\label{eq:excess_identity_in_proof}
\rwise_P(\hat f_n)-\rwise_P(f^*) = R^{\mathrm{sq}}_P(\hat g_n)-R^{\mathrm{sq}}_P(g^*).
\end{equation}

Under Assumption~\ref{assum:boundedness}, the squared loss is bounded as in \eqref{eq:V_bound_wise}.
Moreover, the star-shaped assumption on $\calF$ implies $\calG$ is star-shaped at $g^*$, and thus by
Lemma~\ref{lem:bernstein_squared_starshaped} (with the specialization \eqref{eq:GVmax_wise}),
the squared-loss problem $(P,\ell_{\mathrm{sq}},\calG)$ satisfies the $(1,B_0)$-Bernstein condition
with $B_0=4B_y(B+1)^2$.
Since the loss is bounded, Theorem~5.4 of \cite{vanervenFastRates2015} implies that
$(P,\ell_{\mathrm{sq}},\calG)$ satisfies a (strong) $\eta$-central condition for some $\eta>0$
depending only on $B_0$ (and hence only on $B,B_y$).

By \eqref{eq:wise_sq_empirical}, $\hat g_n$ is an ERM over $\calG$ for the bounded loss $\ell_{\mathrm{sq}}$.
By Lemma \ref{lem:VC_transfer}, $\ell_{\mathrm{sq}}\circ\calG$ has VC-type metric entropy.
Therefore, we may apply Theorem~7.7 of \cite{vanervenFastRates2015} to obtain that with probability at least $1-\delta$,
\begin{equation}
\label{eq:sq_fast_rate_in_proof}
R^{\mathrm{sq}}_P(\hat g_n)-R^{\mathrm{sq}}_P(g^*)
\ \le\
\tilde C_2 \cdot \frac{\nu\log(C_1 n)+\log(1/\delta)}{n},
\end{equation}
for constants $C_1,\tilde C_2>0$ depending only on $(B,B_y)$ (through the bounded range and the implied $\eta$).

Combining \eqref{eq:excess_identity_in_proof} and \eqref{eq:sq_fast_rate_in_proof}, we have 
\begin{align}
\label{eq:wise_excess_fast_rate}
\rwise_P(\hat f_n)-\rwise_P(f^*)
\ \le\
\tilde C_2 \cdot \frac{\nu\log(C_1 n)+\log(1/\delta)}{n}.
\end{align}

Finally, applying Proposition~\ref{prop:calibration} yields 
\begin{align*}
\rdec_P(\hat f_n) - R_P^*
\ \le\
2 B_{\mathcal{S}} B_y^{\frac{1}{2}}
\sqrt{\rwise_P(\hat f_n)-\rwise_P(f^*)}
\ \le\
 C_2 \sqrt{\frac{\nu\log(C_1 n)+\log(1/\delta)}{n}},
\end{align*}
where $C_1,C_2$ are universal constants only depending on $B,B_y,B_{\mathcal{S}}$
\end{proof}

\subsection{Calibration and Excess Regret Bound for Strongly Convex Level Sets}

We first recall Lemma 4.1 and Lemma 4.2 from \cite{liuRiskBoundsCalibration2021}. 

    \begin{lemma}\cite{liuRiskBoundsCalibration2021}
    \label{lm:spo-strong-lb-ub-special}
        Suppose that Assumption \ref{assu:strongly_cvx_set} holds.
        Then, for any $c_1, c_2 \in \bbR^d$, it holds that 

        \begin{equation*}
            c_1^\top (w^*(c_2) - w^*(c_1)) \le \frac{L}{2\sqrt{2\mu (r - g_{\min})}} \|{c_1}\| \|w^*(c_1) - w^*(c_2)\|^2. 
        \end{equation*}
        Moreover, if ${c}_1, {c}_2 \not= 0$, we have
        \begin{equation*}
            \|{w^*(c_1)-w^*(c_2)}\| \le \frac{\sqrt{2 L (r - g_{\min})}}{\mu} \cdot \left\|{\frac{c_1}{\|{c_1}\|} - \frac{c_2}{\|{c_2}\|}}\right\|. 
        \end{equation*}

    \end{lemma}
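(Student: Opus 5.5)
The plan is to work entirely from the KKT structure of $\min_{w\in\mathcal S} c^\top w$ over the level set $\mathcal S=\{g\le r\}$. For $c\neq 0$, the solution $w^*(c)$ lies on the boundary, so $g(w^*(c))=r$. Slater's condition holds because $r>g_{\min}$, so there is a multiplier $\lambda>0$ with $c=-\lambda\nabla g(w^*(c))$, i.e.\ $\lambda=\|c\|/\|\nabla g(w^*(c))\|$.

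The quantitative input is two-sided control of $\|\nabla g\|$ on the boundary, both consequences of Assumption \ref{assu:strongly_cvx_set}. The lower bound is the Polyak–\L{}ojasiewicz inequality from $\mu$-strong convexity, $\|\nabla g(w)\|^2\ge 2\mu(g(w)-g_{\min})$. The upper bound is the standard smooth-function bound $\|\nabla g(w)\|^2\le 2L(g(w)-g_{\min})$. On the boundary these give
\[
\sqrt{2\mu(r-g_{\min})}\le\|\nabla g(w)\|\le\sqrt{2L(r-g_{\min})}.
\]

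\textbf{First inequality.} Write $w_1=w^*(c_1)$ and $w_2=w^*(c_2)$. If $c_1=0$ both sides vanish, so assume $c_1\neq0$. Then $c_1^\top(w_2-w_1)=-\lambda_1\nabla g(w_1)^\top(w_2-w_1)$. By $L$-smoothness,
\[
-\nabla g(w_1)^\top(w_2-w_1)\le g(w_1)-g(w_2)+\tfrac L2\|w_2-w_1\|^2 .
\]
Since $g(w_1)=r$ and $g(w_2)=r$ when $c_2\ne0$, the right-hand side is $\tfrac L2\|w_2-w_1\|^2$. The PL lower bound on $\|\nabla g(w_1)\|$ gives $\lambda_1\le \|c_1\|/\sqrt{2\mu(r-g_{\min})}$, and combining the two yields exactly the stated constant.

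The step I expect to be the delicate one is the degenerate case $c_2=0$. There $w^*(0)$ is determined only by the tie-breaking rule and may lie in the interior, making $r-g(w_2)>0$. I would handle this by noting that the lemma is only applied with nonzero cost vectors: under the later assumption $\|\bbE[Y|x]\|\ge\gamma$ and $\|f(x)\|>0$ from Assumption \ref{assum:boundedness}. Alternatively, I would require the tie-break to select a boundary point, as in \cite{liuRiskBoundsCalibration2021}.

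\textbf{Second inequality.} Take $c_1,c_2\ne0$ and set $u_i=c_i/\|c_i\|=-\nabla g(w_i)/\|\nabla g(w_i)\|$. Strong convexity gives
\[
-\nabla g(w_1)^\top(w_2-w_1)\ge g(w_1)-g(w_2)+\tfrac\mu2\|w_2-w_1\|^2=\tfrac\mu2\|w_2-w_1\|^2 .
\]
Dividing by $\|\nabla g(w_1)\|\le\sqrt{2L(r-g_{\min})}$ gives
\[
u_1^\top(w_2-w_1)\ge \frac{\mu}{2\sqrt{2L(r-g_{\min})}}\|w_2-w_1\|^2 ,
\]
and symmetrically for $u_2^\top(w_1-w_2)$. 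Summing the two bounds gives
\[
(u_1-u_2)^\top(w_2-w_1)\ge \frac{\mu}{\sqrt{2L(r-g_{\min})}}\|w_1-w_2\|^2 .
\]
Cauchy–Schwarz on the left-hand side, followed by dividing by $\|w_1-w_2\|$ (the case $w_1=w_2$ is trivial), yields
\[
\|w_1-w_2\|\le \frac{\sqrt{2L(r-g_{\min})}}{\mu}\,\|u_1-u_2\|,
\]
as claimed.
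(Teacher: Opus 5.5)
Your proof is correct and reproduces both constants exactly. The paper gives no proof of its own here; it only recalls Lemmas 4.1 and 4.2 of \cite{liuRiskBoundsCalibration2021}. Your argument makes the appendix self-contained, which the citation does not. Its ingredients are:
\begin{itemize}
\item the KKT representation $c=-\lambda\nabla g(w^*(c))$ with $\lambda=\|c\|/\|\nabla g(w^*(c))\|$;
\item the two-sided bound $\sqrt{2\mu(r-g_{\min})}\le\|\nabla g\|\le\sqrt{2L(r-g_{\min})}$ on $\{g=r\}$;
\item the descent inequality, for the first claim;
\item the strong-convexity inequality applied at both $w_1$ and $w_2$, summed, then Cauchy--Schwarz, for the second claim.
\end{itemize}
The citation buys brevity. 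Your route buys visibility of a hypothesis that the paper's transcription drops.

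Your worry about $c_2=0$ is more than a technicality. As quantified ``for any $c_1,c_2\in\bbR^d$'', the first inequality is false unless $w^*(0)$ is placed on the boundary. Here is a counterexample:
\begin{itemize}
\item Take $g(w)=\tfrac12\|w\|^2$ on $\bbR^2$ with $r=\tfrac12$, so $\mu=L=1$, $g_{\min}=0$, and $\mathcal S$ is the unit ball.
\item Take $c_1=e_1$ and $c_2=0$, with the minimum-$\ell_2$-norm tie-break that Appendix~\ref{append:scale_invariant} lists as a common choice. Then $w^*(0)=0$ and $w^*(c_1)=-e_1$.
\item The left side is $e_1^\top(0+e_1)=1$, while the right side is $\tfrac12\cdot 1\cdot 1=\tfrac12$.
\end{itemize}
Your restriction to $c_2\neq 0$ (or to a boundary-selecting tie-break) is exactly the right repair, and it costs nothing downstream. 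In the proof of Corollary~\ref{cor:risk_bound_stronly_cvx} the lemma is applied with $c_1=f^*(X)$ and $c_2=f(X)$. Both are nonzero, by the assumption $\|\bbE_P[Y\mid x]\|\ge\gamma$ and by Assumption~\ref{assum:boundedness} respectively. State this case as a counterexample rather than calling it delicate.
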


Together with the following lemma, we can obtain a sharper bound on the inner product, compared to the one we previously obtained using the Cauchy-Schwarz inequality. 
\begin{lemma}
\label{lem:norm_equiv}
Let $y_1, y_2 \in \mathbb{R}^d$ be non-zero vectors. Assume there exists a constant $C > 0$ such that $\|y_1\| \ge C$. Then, the following inequality holds:
\begin{equation*}
    \left\| \frac{y_1}{\|y_1\|} - \frac{y_2}{\|y_2\|} \right\| \le \frac{2}{C} \|y_1 - y_2\|
\end{equation*}
\end{lemma}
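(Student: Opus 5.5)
The plan is a direct triangle-inequality argument with one carefully chosen intermediate point. We insert the vector $y_2/\|y_1\|$, i.e. rescale $y_2$ by the norm of $y_1$ rather than its own norm, and split
\[
\left\| \frac{y_1}{\|y_1\|} - \frac{y_2}{\|y_2\|} \right\| \le \left\| \frac{y_1}{\|y_1\|} - \frac{y_2}{\|y_1\|} \right\| + \left\| \frac{y_2}{\|y_1\|} - \frac{y_2}{\|y_2\|} \right\|.
\]
This choice is preferred over rescaling by $\|y_2\|$ because only $\|y_1\|$ has a lower bound.

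The first term equals $\|y_1-y_2\|/\|y_1\|$, which is at most $\|y_1-y_2\|/C$ by the hypothesis $\|y_1\|\ge C$.

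The second term equals $\|y_2\|\,\bigl|\tfrac{1}{\|y_1\|}-\tfrac{1}{\|y_2\|}\bigr| = \bigl|\|y_2\|-\|y_1\|\bigr|/\|y_1\|$. By the reverse triangle inequality this is at most $\|y_1-y_2\|/\|y_1\|\le \|y_1-y_2\|/C$. Adding the two pieces gives the factor $2/C$.

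There is no real obstacle here. The only point requiring care is the choice of intermediate point: normalizing by $\|y_1\|$ is what ensures that both denominators are $\|y_1\|$ and hence both are controlled by $C$. Both vectors are assumed nonzero, so every normalization in the argument is well defined.
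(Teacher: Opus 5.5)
Your proposal is correct and follows essentially the same route as the paper: insert the intermediate point $y_2/\|y_1\|$, apply the triangle inequality, and bound the scalar term $\|y_2\|\,\bigl|\tfrac{1}{\|y_1\|}-\tfrac{1}{\|y_2\|}\bigr|$ by the reverse triangle inequality. Each of the two pieces is then at most $\|y_1-y_2\|/\|y_1\| \le \|y_1-y_2\|/C$, which gives the factor $2/C$.
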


\begin{proof}[Proof of Lemma \ref{lem:norm_equiv}]
Consider the difference between the normalized vectors:
\[
\Delta = \frac{y_1}{\|y_1\|} - \frac{y_2}{\|y_2\|}.
\]
We add and subtract the term $\frac{y_2}{\|y_1\|}$ to facilitate factorization:
\[
\Delta = \frac{y_1}{\|y_1\|} - \frac{y_2}{\|y_1\|} + \frac{y_2}{\|y_1\|} - \frac{y_2}{\|y_2\|}.
\]
Grouping the terms, we obtain:
\[
\Delta = \frac{1}{\|y_1\|}(y_1 - y_2) + y_2 \left( \frac{1}{\|y_1\|} - \frac{1}{\|y_2\|} \right).
\]
Applying the triangle inequality yields:

\[
\|\Delta\| \le \frac{1}{\|y_1\|} \|y_1 - y_2\| + \|y_2\| \left| \frac{1}{\|y_1\|} - \frac{1}{\|y_2\|} \right|.
\]
Next, we bound the scalar term. Finding a common denominator and applying the reverse triangle inequality $| \|y_2\| - \|y_1\| | \le \|y_1 - y_2\|$:
\[
\left| \frac{1}{\|y_1\|} - \frac{1}{\|y_2\|} \right| = \frac{| \|y_2\| - \|y_1\| |}{\|y_1\| \|y_2\|} \le \frac{\|y_1 - y_2\|}{\|y_1\| \|y_2\|}.
\]
Substituting this back into our expression for $\|\Delta\|$:
\[
\|\Delta\| \le \frac{\|y_1 - y_2\|}{\|y_1\|} + \|y_2\| \frac{\|y_1 - y_2\|}{\|y_1\| \|y_2\|}.
\]
Canceling $\|y_2\|$ in the second term:
\[
\|\Delta\| \le \frac{\|y_1 - y_2\|}{\|y_1\|} + \frac{\|y_1 - y_2\|}{\|y_1\|} = \frac{2}{\|y_1\|} \|y_1 - y_2\|.
\]
Given the lower bound assumption $\|y_1\| \ge C$, it follows that $\frac{1}{\|y_1\|} \le \frac{1}{C}$. Therefore:
\[
\left\| \frac{y_1}{\|y_1\|} - \frac{y_2}{\|y_2\|} \right\| \le \frac{2}{C} \|y_1 - y_2\|.
\]
\end{proof}

Equipped with these two lemmas, we are ready to prove a calibration bound.

\begin{proof}[Proof of Corollary \ref{cor:risk_bound_stronly_cvx}]
    Let $f^*(x) = \frac{\mathbb{E}_P[Y \mid X=x]}{\mathbb{E}_P[\|Y\| \mid X=x]}$ denote the Bayes risk minimizer of $\lwise$. Follow the same analysis in Proposition \ref{prop:calibration}, we get
    $$
    \begin{aligned}
    \rdec_P(f) - R_P^* &= \mathbb{E}_P\left[ Y^\top w^*(f(X)) - Y^\top w^*(\bbE_P[Y|X])  \right]\\
    & = \mathbb{E}_P\left[ \mathbb{E}_P[Y \mid X]^\top (w^*(f(X)) -w^*(\bbE_P[Y|X] ))\right]\\
    & = \mathbb{E}_P\left[ \mathbb{E}_P[\|Y\| \mid X]\frac{\mathbb{E}_P[Y \mid X]^\top}{\mathbb{E}_P[\|Y\| \mid X]} (w^*(f(X)) - w^*(\bbE_P[Y|X]))\right]\\
    & = \mathbb{E}_P\left[ \mathbb{E}_P[\|Y\| \mid X]f^*(X)^\top (w^*(f(X)) - w^*(f^*(X)))\right].
    \end{aligned}
    $$
We now invoke the properties of the strongly convex level sets established in Lemma \ref{lm:spo-strong-lb-ub-special}. Specifically, we apply the first inequality of Lemma \ref{lm:spo-strong-lb-ub-special} with $c_1 = f^*(X)$ and $c_2 = f(X)$. This yields:
    $$
    f^*(X)^\top (w^*(f(X)) - w^*(f^*(X))) \le \frac{L}{2\sqrt{2\mu (r - g_{\min})}} \|f^*(X)\| \|w^*(f^*(X)) - w^*(f(X))\|^2.
    $$
    Substituting this upper bound back into the expectation:
    $$
    \begin{aligned}
    \rdec_P(f) - R_P^* &\le \mathbb{E}_P\Bigg[ \mathbb{E}_P[\|Y\| \mid X] \frac{L}{2\sqrt{2\mu (r - g_{\min})}} \|f^*(X)\| \|w^*(f^*(X)) - w^*(f(X))\|^2 \Bigg].
    \end{aligned}
    $$
    Next, we apply the second inequality of Lemma \ref{lm:spo-strong-lb-ub-special} to bound the distance between the optimal decisions in terms of the angular difference of the cost vectors:
    $$
    \|w^*(f^*(X)) - w^*(f(X))\| \le \frac{\sqrt{2 L (r - g_{\min})}}{\mu} \left\| \frac{f^*(X)}{\|f^*(X)\|} - \frac{f(X)}{\|f(X)\|} \right\|.
    $$
    Squaring this term and combining it with the previous inequality, we obtain:
    $$
    \begin{aligned}
    \rdec_P(f) - R_P^* &\le \mathbb{E}_P\Bigg[ \mathbb{E}_P[\|Y\| \mid X] \|f^*(X)\| \frac{L^2 \sqrt{r-g_{\min}}}{\mu^{2.5} \sqrt{2}} \left\| \frac{f^*(X)}{\|f^*(X)\|} - \frac{f(X)}{\|f(X)\|} \right\|^2 \Bigg].
    \end{aligned}
    $$
    Under Assumption \ref{assum:boundedness} and the assumption that $\| \bbE_P[Y|X] \| \geq \gamma >0$, we have $f^*(X)$ and $f(X)$ are both non-zero almost surely. Moreover, we have $\|f^*(X)\| = \frac{\|\mathbb{E}[Y|X]\|}{\mathbb{E}[\|Y\||X]} \ge \frac{\gamma}{B_y}$. We apply Lemma \ref{lem:norm_equiv} to relate the angular difference to the Euclidean distance:
    $$
    \left\| \frac{f^*(X)}{\|f^*(X)\|} - \frac{f(X)}{\|f(X)\|} \right\|^2 \le \frac{4B_y^2}{\gamma^2} \| f^*(X) - f(X) \|^2.
    $$
    Note that $\|f^*(X)\| \le 1$ by Jensen's inequality. Combining these results, we can bound the decision regret by the weighted squared error:
    $$
    \begin{aligned}
    \rdec_P(f) - R_P^* &\le \frac{4 B_y^2}{\gamma^2}\frac{L^2 \sqrt{r-g_{\min}}}{\mu^{2.5} \sqrt{2}} \mathbb{E}_P\left[ \mathbb{E}_P[\|Y\| \mid X] \| f^*(X) - f(X) \|^2 \right].
    \end{aligned}
    $$
    Finally, using the equivalent form of the excess WISE risk established in Lemma \ref{lem:ewise_equivalent_form}, we conclude:
    $$
    \rdec_P(f) -R_P^* \le M (\rwise_P(f) - \rwise_P(f^*)),
    $$
    where $M = \frac{2 B_y^2 L^2 \sqrt{2(r-g_{\min})}}{\gamma^2 \mu^{2.5}}$.

    Regarding the excess regret bound, we recall the bound for $\lwise$ risk in equation $\eqref{eq:wise_excess_fast_rate}$.

    To obtain the stated confidence form, set 
$\varepsilon = C'_1 \delta^\nu$ in the preceding bound. Then the bound holds
with probability at least $1-C'_1\delta^\nu$ and
\[
\nu\log(C_1 n)+\log(1/\varepsilon)
=
\nu\log(C_1 n)+\nu\log(1/\delta)-\log C'_1.
\]
Since $\delta\le (nd+1)^{-C_0}$, choosing $C_0$ sufficiently large allows
the term $\nu\log(C_1 n)$ to be absorbed into a constant multiple of
$\nu\log(1/\delta)$. Renaming constants yields
\[
R_P^{\ell_{\rm decision}}(\hat f_n)-R_P^\star
\le
C_2\frac{\nu\log(1/\delta)}{n}
\]
with probability at least $1-C_1\delta^\nu$.
\end{proof}
\section{Experimental Details}
\label{append:experiment}

For our numerical experiment implementation, we adopted the PyEPO framework \cite{tang2024pyepo} where Gurobi is used as the optimization solver. For each method, we trained for 100 epochs and repeated for 20 independent trials to obtain the experiment result. The ADAM optimizer is used for learning predictors from a linear hypothesis class. Regarding hyperparameter configuration, we set $\lambda = 100$ for DBB \cite{poganvcic2019differentiation}, and $M=3,\epsilon=1.0$ for PFY \cite{berthetLearningDifferentiablePertubed2020}.

To evaluate the excess regret, we generate 2000 samples for the 2D knapsack problem, 10000 samples for the shortest path experiment, and 5000 samples for the portfolio problem.

Experiments were run on a Windows computer with an Intel(R) Core(TM) i9-14900F CPU.

\subsection{2D Knapsack}

In this experiment, the learning rate is set to be 1e-2 for $n = 100$, 5e-3 for $n = 200$, and 3e-3 for $n = 400, 800$. 
For a given cost vector $y$, the optimization formulation is:

\begin{align*}
\max_{\textbf{w} \in \mathbb{R}^d} \quad & \sum_{j=1}^d w_j y_{j} \\
\text{s.t.} \quad & \sum_{j=1}^d w_j W_{1j} \leq W , \\
& \sum_{j=1}^d w_j W_{2j} \leq W , \\
& w_j \in \{0,1 \}, j=1,\dots,d.
\end{align*}

Table \ref{tab:knapsack_results} summarizes the results of the experiment.

\begin{table}[h]
\centering
\caption{2D knapsack: normalized testing set regret (\%) under different misspecification degrees and sample sizes. Mean $\pm$ Standard Deviation.}
\begin{tabular}{ccccccccc}
\toprule
Degree & Samples & MSE & WISE & SPO+ & PFY & DBB & CAVE & LAVA \\
\midrule
\multirow{4}{*}{1} & 100 & 12.43 $\pm$ 0.39 & \textbf{12.01 $\pm$ 0.34} & 12.59 $\pm$ 0.44 & 13.45 $\pm$ 0.53 & 17.30 $\pm$ 1.19 & 23.02 $\pm$ 4.12 & 14.84 $\pm$ 0.82 \\
 & 200 & 12.11 $\pm$ 0.35 & \textbf{11.35 $\pm$ 0.19} & 11.93 $\pm$ 0.33 & 12.51 $\pm$ 0.36 & 16.28 $\pm$ 1.93 & 22.62 $\pm$ 2.84 & 13.27 $\pm$ 0.49 \\
 & 400 & 11.71 $\pm$ 0.22 & \textbf{11.05 $\pm$ 0.13} & 11.64 $\pm$ 0.16 & 12.32 $\pm$ 0.19 & 15.25 $\pm$ 0.99 & 19.54 $\pm$ 2.58 & 12.55 $\pm$ 0.41 \\
 & 800 & 11.15 $\pm$ 0.13 & \textbf{11.05 $\pm$ 0.11} & 11.53 $\pm$ 0.17 & 11.84 $\pm$ 0.15 & 14.24 $\pm$ 1.06 & 20.09 $\pm$ 2.01 & 11.49 $\pm$ 0.21 \\
\midrule
\multirow{4}{*}{2} & 100 & 8.74 $\pm$ 0.26 & \textbf{8.32 $\pm$ 0.22} & 9.09 $\pm$ 0.35 & 10.32 $\pm$ 0.52 & 17.11 $\pm$ 2.90 & 17.97 $\pm$ 2.91 & 11.88 $\pm$ 0.64 \\
 & 200 & 8.65 $\pm$ 0.30 & \textbf{7.80 $\pm$ 0.11} & 8.41 $\pm$ 0.19 & 9.21 $\pm$ 0.30 & 14.83 $\pm$ 2.56 & 16.78 $\pm$ 2.10 & 10.34 $\pm$ 0.41 \\
 & 400 & 8.23 $\pm$ 0.18 & \textbf{7.63 $\pm$ 0.10} & 8.05 $\pm$ 0.13 & 8.60 $\pm$ 0.17 & 12.28 $\pm$ 1.67 & 16.42 $\pm$ 2.94 & 9.08 $\pm$ 0.30 \\
 & 800 & \textbf{7.80 $\pm$ 0.09} & 7.85 $\pm$ 0.09 & 7.93 $\pm$ 0.13 & 8.49 $\pm$ 0.17 & 10.60 $\pm$ 1.01 & 15.14 $\pm$ 1.62 & 8.37 $\pm$ 0.25 \\
\midrule
\multirow{4}{*}{4} & 100 & \textbf{6.64 $\pm$ 0.42} & 6.89 $\pm$ 0.37 & 6.66 $\pm$ 0.31 & 7.58 $\pm$ 0.35 & 19.20 $\pm$ 3.94 & 11.17 $\pm$ 1.66 & 9.57 $\pm$ 0.76 \\
 & 200 & 6.29 $\pm$ 0.29 & 6.43 $\pm$ 0.22 & \textbf{6.01 $\pm$ 0.12} & 6.66 $\pm$ 0.34 & 14.69 $\pm$ 3.87 & 10.92 $\pm$ 2.03 & 8.10 $\pm$ 0.56 \\
 & 400 & 5.88 $\pm$ 0.20 & 5.99 $\pm$ 0.21 & \textbf{5.55 $\pm$ 0.17} & 5.90 $\pm$ 0.14 & 11.50 $\pm$ 2.55 & 10.30 $\pm$ 2.31 & 6.78 $\pm$ 0.33 \\
 & 800 & 6.50 $\pm$ 0.18 & 6.42 $\pm$ 0.14 & \textbf{5.72 $\pm$ 0.12} & 5.92 $\pm$ 0.14 & 9.59 $\pm$ 2.61 & 11.80 $\pm$ 1.44 & 6.09 $\pm$ 0.22 \\
\bottomrule
\end{tabular}
\label{tab:knapsack_results}
\end{table}

\subsection{Shortest Path}

In this experiment, the learning rate is set to be 5e-3 for $n = 200, 400$, 2e-3 for $n = 800$, and 1e-3 for $n = 1600$. 
Regarding the data generating process, the input features are sampled as $x_i \overset{\text{i.i.d.}}{\sim} \mathcal{N}(0, {I}_5)$. The edge cost follows $y_{ij} = f_{ij}^*(x_i)(1+\epsilon_{ij})$ where $\epsilon_{ij} \overset{\text{i.i.d.}}{\sim} \rm{Unif} [-0.5, 0.5]$ and $f_{ij}$ follows
\[
    f^*_{ij}(x) = \frac{1}{3.5^{degree}}\left[ \left( \frac{1}{\sqrt{5}}(B^* x_i)_j + 3 \right)^{{degree}} + 1\right]
\]
where $B^* \in \left\{0, 1\right\}^{40 \times 5}$ has i.i.d. $\text{Bernoulli}(0.5)$ entries. 

For a given cost $y$, the optimization problem is formulated as follows:

\begin{align*}
    \min_w  &\sum_{(i,j) \in E} y_{ij} w_{ij} \\
    \text{Subject to:} \quad & \sum_{j:(s,j) \in E} w_{sj} - \sum_{j:(j,s) \in E} w_{js} = 1 && \text{(Source Node)} \\
    & \sum_{j:(i,j) \in E} w_{ij} - \sum_{j:(j,i) \in E} w_{ji} = 0 && \forall i \in V \setminus \{s, t\} \\
    & \sum_{j:(t,j) \in E} w_{tj} - \sum_{j:(j,t) \in E} w_{jt} = -1 && \text{(Sink Node)} \\
    & w_{ij} \in \{0, 1\} && \forall (i,j) \in E,
\end{align*} where $G=(V,E)$ is a directed graph representing the $5 \times 5$ grid, consisting of a set of nodes $V$ and edges $E$. We define $s$ as the source node and $t$ as the destination (sink) node. Note that the integer constraint can be relaxed since the constraint matrix is unimodular, and thus this problem can be solved in polynomial time \cite{wolsey1999integer}.

Figure \ref{fig:shortestpath_all} with Table \ref{tab:shortest_path_interleaved} together illustrates a comprehensive result compared to Figure \ref{fig:shortestpath}.

\begin{table}[h]
\caption{Shortest path: normalized testing set regret (\%) and total training time (s) under varying misspecification degrees and number of samples. Mean $\pm$ Standard Deviation.}
\centering
\resizebox{\textwidth}{!}{
\begin{tabular}{cccccccccccccc}
\toprule
Degree & Samples & \multicolumn{2}{c}{MSE} & \multicolumn{2}{c}{WISE} & \multicolumn{2}{c}{SPO+} & \multicolumn{2}{c}{PFY} & \multicolumn{2}{c}{DBB} & \multicolumn{2}{c}{LAVA} \\
\cmidrule(lr){3-4} \cmidrule(lr){5-6} \cmidrule(lr){7-8} \cmidrule(lr){9-10} \cmidrule(lr){11-12} \cmidrule(lr){13-14}
 &  & Regret(\%) & Time(s) & Regret(\%) & Time(s) & Regret(\%) & Time(s) & Regret(\%) & Time(s) & Regret(\%) & Time(s) & Regret(\%) & Time(s) \\
\midrule
\multirow{4}{*}{1} & 200 & \textbf{16.10 $\pm$ 0.57} & \textbf{1.42 $\pm$ 0.08} & 16.20 $\pm$ 0.60 & 1.58 $\pm$ 0.16 & 16.78 $\pm$ 0.63 & 130.02 $\pm$ 5.73 & 17.52 $\pm$ 0.69 & 172.65 $\pm$ 8.83 & 23.18 $\pm$ 1.75 & 267.73 $\pm$ 17.20 & 19.44 $\pm$ 0.95 & 6.45 $\pm$ 0.63 \\
~ & 400 & \textbf{15.81 $\pm$ 0.58} & \textbf{2.94 $\pm$ 0.36} & 15.93 $\pm$ 0.61 & 2.98 $\pm$ 0.22 & 16.18 $\pm$ 0.59 & 265.87 $\pm$ 8.25 & 16.73 $\pm$ 0.55 & 347.94 $\pm$ 20.23 & 21.22 $\pm$ 1.46 & 530.57 $\pm$ 14.70 & 17.82 $\pm$ 0.70 & 12.30 $\pm$ 1.63 \\
~ & 800 & \textbf{15.65 $\pm$ 0.58} & 6.04 $\pm$ 0.73 & 15.70 $\pm$ 0.58 & \textbf{5.78 $\pm$ 0.25} & 15.83 $\pm$ 0.61 & 520.57 $\pm$ 16.80 & 16.18 $\pm$ 0.60 & 713.45 $\pm$ 37.35 & 20.28 $\pm$ 1.73 & 1063.61 $\pm$ 30.35 & 17.08 $\pm$ 0.73 & 25.43 $\pm$ 2.33 \\
~ & 1600 & \textbf{15.53 $\pm$ 0.59} & \textbf{11.70 $\pm$ 2.03} & 15.62 $\pm$ 0.57 & 11.86 $\pm$ 1.19 & 15.63 $\pm$ 0.58 & 986.79 $\pm$ 132.03 & 15.82 $\pm$ 0.60 & 1378.56 $\pm$ 14.25 & 19.74 $\pm$ 1.53 & 2069.74 $\pm$ 40.72 & 16.72 $\pm$ 0.73 & 49.06 $\pm$ 5.27 \\
\midrule
\multirow{4}{*}{2} & 200 & \textbf{10.96 $\pm$ 0.77} & \textbf{1.46 $\pm$ 0.10} & 11.01 $\pm$ 0.81 & 1.53 $\pm$ 0.10 & 11.52 $\pm$ 0.69 & 130.92 $\pm$ 3.23 & 12.04 $\pm$ 0.72 & 174.03 $\pm$ 6.23 & 25.04 $\pm$ 3.54 & 257.43 $\pm$ 13.50 & 14.09 $\pm$ 1.07 & 6.41 $\pm$ 0.48 \\
~ & 400 & \textbf{10.81 $\pm$ 0.79} & 2.97 $\pm$ 0.25 & 10.83 $\pm$ 0.81 & \textbf{2.93 $\pm$ 0.28} & 11.07 $\pm$ 0.80 & 271.62 $\pm$ 15.09 & 11.38 $\pm$ 0.79 & 354.23 $\pm$ 17.21 & 22.18 $\pm$ 4.47 & 526.84 $\pm$ 13.11 & 12.62 $\pm$ 0.94 & 12.57 $\pm$ 2.85 \\
~ & 800 & 10.73 $\pm$ 0.73 & 5.84 $\pm$ 0.80 & \textbf{10.69 $\pm$ 0.74} & \textbf{5.78 $\pm$ 0.51} & 10.82 $\pm$ 0.76 & 526.81 $\pm$ 19.07 & 10.94 $\pm$ 0.75 & 709.77 $\pm$ 25.45 & 18.51 $\pm$ 3.60 & 1042.49 $\pm$ 16.14 & 11.51 $\pm$ 0.86 & 21.40 $\pm$ 2.69 \\
~ & 1600 & \textbf{10.61 $\pm$ 0.75} & \textbf{11.45 $\pm$ 0.99} & 10.63 $\pm$ 0.74 & 11.47 $\pm$ 0.37 & 10.63 $\pm$ 0.75 & 1020.61 $\pm$ 56.45 & 10.74 $\pm$ 0.73 & 1366.48 $\pm$ 52.90 & 16.40 $\pm$ 1.64 & 2110.98 $\pm$ 120.82 & 11.30 $\pm$ 0.81 & 49.06 $\pm$ 5.95 \\
\midrule
\multirow{4}{*}{4} & 200 & 9.29 $\pm$ 0.99 & \textbf{1.46 $\pm$ 0.15} & \textbf{8.48 $\pm$ 0.98} & 1.52 $\pm$ 0.04 & 8.82 $\pm$ 0.94 & 128.60 $\pm$ 7.44 & 8.92 $\pm$ 1.00 & 159.06 $\pm$ 5.88 & 39.21 $\pm$ 12.40 & 260.67 $\pm$ 13.43 & 13.52 $\pm$ 1.74 & 5.79 $\pm$ 1.08 \\
~ & 400 & 9.09 $\pm$ 1.05 & \textbf{2.97 $\pm$ 0.37} & 8.26 $\pm$ 1.06 & 2.98 $\pm$ 0.20 & \textbf{8.11 $\pm$ 0.91} & 251.47 $\pm$ 9.83 & 8.23 $\pm$ 0.84 & 350.64 $\pm$ 14.55 & 25.70 $\pm$ 5.21 & 523.71 $\pm$ 29.44 & 9.37 $\pm$ 0.91 & 12.73 $\pm$ 1.59 \\
~ & 800 & 8.99 $\pm$ 0.95 & \textbf{5.60 $\pm$ 0.46} & 8.09 $\pm$ 0.89 & 5.70 $\pm$ 0.45 & 7.81 $\pm$ 0.91 & 512.28 $\pm$ 25.68 & \textbf{7.79 $\pm$ 0.88} & 707.22 $\pm$ 38.22 & 24.30 $\pm$ 9.89 & 1003.85 $\pm$ 41.60 & 8.41 $\pm$ 0.95 & 27.01 $\pm$ 4.04 \\
~ & 1600 & 8.79 $\pm$ 0.99 & \textbf{10.79 $\pm$ 0.43} & 8.03 $\pm$ 0.90 & 11.79 $\pm$ 1.15 & 7.64 $\pm$ 0.84 & 1007.58 $\pm$ 76.91 & \textbf{7.62 $\pm$ 0.83} & 1363.92 $\pm$ 47.81 & 17.08 $\pm$ 3.38 & 2007.60 $\pm$ 143.52 & 8.11 $\pm$ 0.93 & 47.54 $\pm$ 8.62 \\
\midrule
\multirow{4}{*}{6} & 200 & 14.15 $\pm$ 2.09 & \textbf{1.53 $\pm$ 0.18} & 10.76 $\pm$ 1.56 & 1.62 $\pm$ 0.15 & \textbf{9.74 $\pm$ 1.29} & 133.37 $\pm$ 7.28 & 9.79 $\pm$ 1.32 & 171.14 $\pm$ 13.53 & 55.02 $\pm$ 19.27 & 259.30 $\pm$ 23.50 & 17.95 $\pm$ 3.16 & 6.54 $\pm$ 0.73 \\
~ & 400 & 13.49 $\pm$ 2.02 & \textbf{2.88 $\pm$ 0.26} & 10.09 $\pm$ 1.59 & 2.90 $\pm$ 0.10 & \textbf{8.76 $\pm$ 1.36} & 262.18 $\pm$ 16.30 & 8.77 $\pm$ 1.36 & 357.57 $\pm$ 15.06 & 32.49 $\pm$ 9.77 & 507.86 $\pm$ 35.04 & 10.89 $\pm$ 1.49 & 12.05 $\pm$ 3.02 \\
~ & 800 & 13.25 $\pm$ 1.84 & \textbf{5.42 $\pm$ 0.28} & 9.74 $\pm$ 1.39 & 5.66 $\pm$ 0.41 & 8.27 $\pm$ 1.25 & 501.70 $\pm$ 32.86 & \textbf{8.11 $\pm$ 1.19} & 684.61 $\pm$ 25.87 & 28.35 $\pm$ 9.50 & 971.81 $\pm$ 57.21 & 8.94 $\pm$ 1.29 & 24.96 $\pm$ 1.48 \\
~ & 1600 & 12.78 $\pm$ 2.05 & 12.16 $\pm$ 2.23 & 9.76 $\pm$ 1.39 & \textbf{11.66 $\pm$ 0.92} & \textbf{7.99 $\pm$ 1.15} & 1072.66 $\pm$ 68.49 & 7.99 $\pm$ 1.18 & 1393.62 $\pm$ 65.80 & 21.81 $\pm$ 7.43 & 1972.05 $\pm$ 198.86 & 8.63 $\pm$ 1.32 & 50.83 $\pm$ 3.97 \\
\midrule
\multirow{4}{*}{8} & 200 & 23.94 $\pm$ 3.73 & \textbf{1.46 $\pm$ 0.14} & 16.87 $\pm$ 3.26 & 1.53 $\pm$ 0.12 & 12.69 $\pm$ 2.21 & 126.56 $\pm$ 4.21 & \textbf{12.67 $\pm$ 2.35} & 172.21 $\pm$ 10.58 & 78.63 $\pm$ 27.91 & 245.07 $\pm$ 12.48 & 25.86 $\pm$ 5.35 & 5.73 $\pm$ 0.78 \\
~ & 400 & 23.26 $\pm$ 3.84 & \textbf{2.77 $\pm$ 0.22} & 15.17 $\pm$ 2.93 & 2.93 $\pm$ 0.10 & \textbf{11.30 $\pm$ 2.16} & 272.46 $\pm$ 15.96 & 11.33 $\pm$ 2.25 & 354.74 $\pm$ 24.27 & 41.58 $\pm$ 13.10 & 505.25 $\pm$ 19.67 & 14.84 $\pm$ 2.37 & 13.14 $\pm$ 1.39 \\
~ & 800 & 22.44 $\pm$ 4.12 & \textbf{5.44 $\pm$ 0.43} & 14.45 $\pm$ 2.59 & 5.67 $\pm$ 0.18 & 10.63 $\pm$ 2.03 & 541.43 $\pm$ 35.77 & \textbf{10.55 $\pm$ 2.10} & 688.41 $\pm$ 22.76 & 40.89 $\pm$ 23.80 & 990.40 $\pm$ 57.33 & 12.37 $\pm$ 3.11 & 22.30 $\pm$ 3.27 \\
~ & 1600 & 21.62 $\pm$ 4.27 & \textbf{11.16 $\pm$ 0.57} & 14.55 $\pm$ 2.50 & 12.57 $\pm$ 3.07 & \textbf{10.33 $\pm$ 1.95} & 1047.50 $\pm$ 33.77 & 10.37 $\pm$ 1.96 & 1379.46 $\pm$ 69.33 & 29.24 $\pm$ 9.40 & 1994.49 $\pm$ 136.90 & 11.49 $\pm$ 2.37 & 48.53 $\pm$ 3.85 \\
\bottomrule
\end{tabular}
}
\label{tab:shortest_path_interleaved}
\end{table}

\begin{figure}
    \centering
    \includegraphics[width=\linewidth]{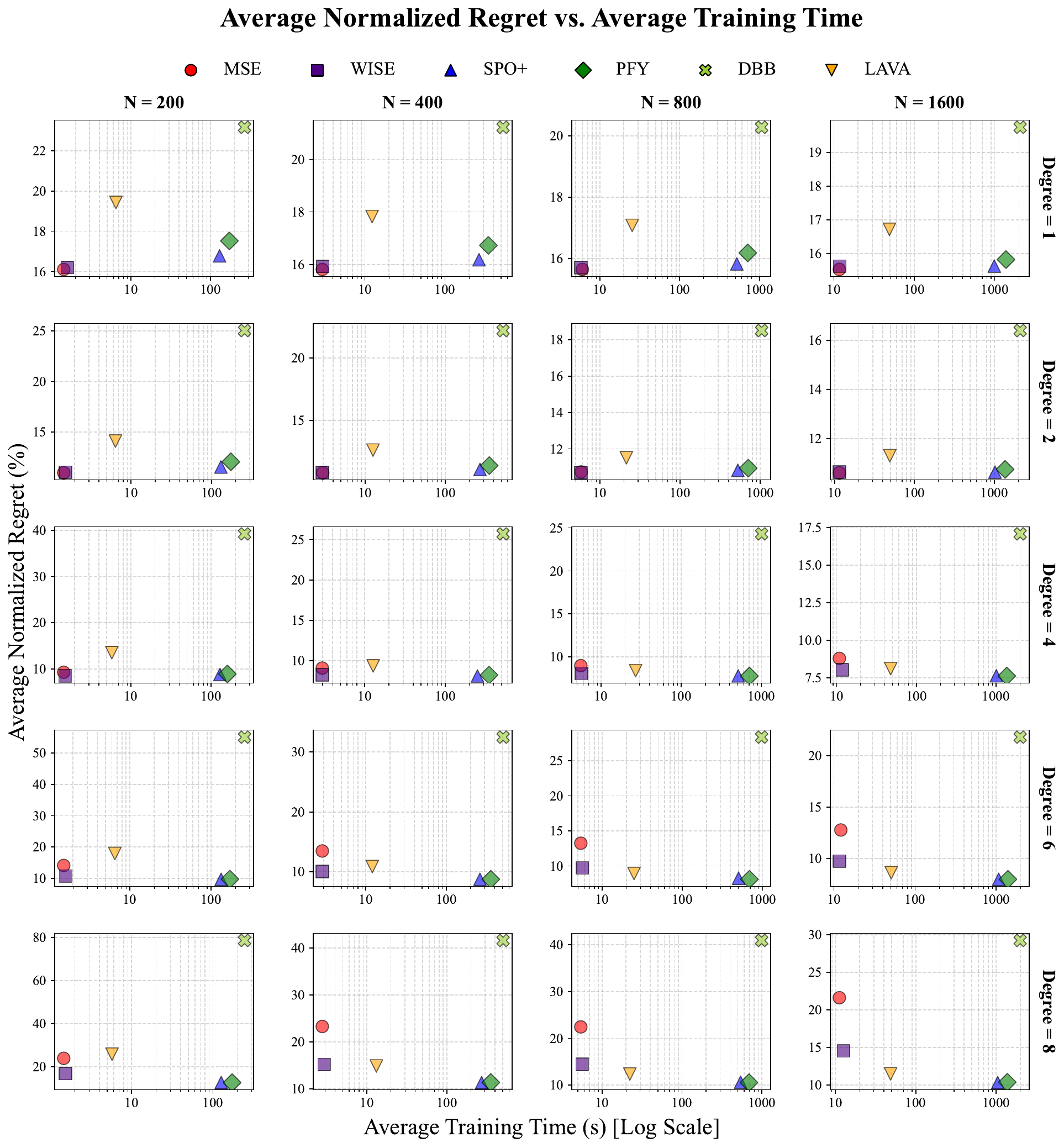}
    \caption{Average normalized test regret vs average training time for the shortest path problem, under varying degrees of misspecification and numbers of training samples. Results are collected from 20 trials and 100 training epochs. }
    \label{fig:shortestpath_all}
\end{figure}

\subsection{Portfolio Optimization}

We considered feature dimension $p=6$, number of stocks $d = 25$. Regarding the data generating process, the input features are sampled as $x_i \overset{\text{i.i.d.}}{\sim} \mathcal{N}(0, {I}_p)$. The asset returns follow an additive structure $r_{i} = \mu(x_i) + \mathbf{e}_i$, where the conditional mean $\mu(x_i)$ is defined component-wise as:
\[
    \mu_j(x_i) = \left( \frac{0.05}{\sqrt{p}}(B x_i)_j + 0.2^{1/degree} \right)^{degree}
\]
where $B \in \{0, 1\}^{d \times p}$ contains i.i.d. $\text{Bernoulli}(0.5)$ entries. The random noise term $\mathbf{e}_i$ captures heavy-tailed behavior and correlations through a factor model:
\[
    \mathbf{e}_i = L \boldsymbol{\zeta}_i + 0.01 \boldsymbol{\epsilon}_i
\]
where $L \in \bbR^{d\times p}$ is a random loading matrix with each entry sampled from $ \rm{Unif} [-0.0025, 0.0025]$, and the components $\boldsymbol{\zeta}_i \in \mathbb{R}^p$ and $\boldsymbol{\epsilon}_i \in \mathbb{R}^d$ are drawn independently from a standardized Student's $t$-distribution with degrees of freedom $\nu = 3$  (scaled to satisfy $\text{Var}(\cdot)=1$). Let $\Sigma = L L^\top + (0.01)^2 I_{25}$ denote the covariance matrix of the stocks. 

Then the optimization formulation is:

\begin{align*}
\max_{\textbf{w} \in \mathbb{R}^d} \quad & \sum_{i=1}^d w_i r_i \\
\text{s.t.} \quad & \textbf{w}^\top \textbf{1} \leq 1 , \\
& \textbf{w}^\top \Sigma \,\textbf{w} \leq \gamma , \\
& \textbf{w} \geq 0 ,
\end{align*} 
where $\gamma = 2.25 \cdot \frac{\sum_{i,j} \Sigma_{i,j}}{25^2}$ is a given risk level tolerance. 

Table \ref{tab:port_results} summarizes the results, where we can see the gap between different methods is narrower compared to the previous two experiments.

\begin{table}[h]
\centering
\caption{Portfolio optimization: normalized testing set regret (\%) under different misspecification degrees and sample sizes. Mean $\pm$ Standard Deviation.}
\begin{tabular}{ccccccc}
\toprule
Degree & Samples & MSE & WISE & SPO+ & PFY & DBB \\
\midrule
\multirow{4}{*}{1} & 200 & \textbf{0.88 $\pm$ 0.02} & 0.89 $\pm$ 0.02 & 0.96 $\pm$ 0.03 & 0.88 $\pm$ 0.02 & 0.95 $\pm$ 0.04 \\
 & 400 & \textbf{0.86 $\pm$ 0.02} & 0.87 $\pm$ 0.02 & 0.91 $\pm$ 0.03 & 0.87 $\pm$ 0.02 & 0.89 $\pm$ 0.02 \\
 & 800 & \textbf{0.85 $\pm$ 0.02} & 0.86 $\pm$ 0.02 & 0.89 $\pm$ 0.02 & 0.86 $\pm$ 0.02 & 0.87 $\pm$ 0.02 \\
 & 1600 & \textbf{0.85 $\pm$ 0.02} & 0.86 $\pm$ 0.02 & 0.87 $\pm$ 0.02 & 0.85 $\pm$ 0.02 & 0.86 $\pm$ 0.02 \\
\midrule
\multirow{4}{*}{3} & 200 & 0.84 $\pm$ 0.02 & \textbf{0.83 $\pm$ 0.02} & 0.91 $\pm$ 0.03 & 0.83 $\pm$ 0.02 & 0.91 $\pm$ 0.04 \\
 & 400 & 0.82 $\pm$ 0.02 & \textbf{0.81 $\pm$ 0.02} & 0.86 $\pm$ 0.03 & 0.82 $\pm$ 0.02 & 0.83 $\pm$ 0.02 \\
 & 800 & 0.81 $\pm$ 0.02 & \textbf{0.80 $\pm$ 0.02} & 0.84 $\pm$ 0.02 & 0.81 $\pm$ 0.02 & 0.82 $\pm$ 0.02 \\
 & 1600 & 0.81 $\pm$ 0.02 & 0.80 $\pm$ 0.02 & 0.82 $\pm$ 0.02 & \textbf{0.80 $\pm$ 0.02} & 0.81 $\pm$ 0.02 \\
\midrule
\multirow{4}{*}{7} & 200 & 0.56 $\pm$ 0.03 & \textbf{0.48 $\pm$ 0.02} & 0.51 $\pm$ 0.02 & 0.48 $\pm$ 0.02 & 0.52 $\pm$ 0.02 \\
 & 400 & 0.53 $\pm$ 0.02 & \textbf{0.46 $\pm$ 0.02} & 0.48 $\pm$ 0.02 & 0.48 $\pm$ 0.02 & 0.47 $\pm$ 0.02 \\
 & 800 & 0.52 $\pm$ 0.02 & \textbf{0.46 $\pm$ 0.01} & 0.47 $\pm$ 0.01 & 0.47 $\pm$ 0.01 & 0.46 $\pm$ 0.01 \\
 & 1600 & 0.52 $\pm$ 0.02 & \textbf{0.45 $\pm$ 0.02} & 0.46 $\pm$ 0.02 & 0.46 $\pm$ 0.02 & 0.45 $\pm$ 0.01 \\
\bottomrule
\end{tabular}
\label{tab:port_results}
\end{table}

\end{document}